\documentclass[preprint]{uai2026}
                        
\usepackage{xcolor}
\usepackage[american]{babel}
\usepackage[]{color-edits}
\addauthor{rd}{purple}
\addauthor{sl}{Green}
\usepackage{subcaption}
\usepackage{natbib} 
\usepackage{mathtools} 
\usepackage{amsfonts}   
\usepackage{amsthm}
\newtheorem{theorem}{Theorem}
\newtheorem{remark}{Remark}
\newtheorem{corollary}{Corollary}
\newtheorem{proposition}{Proposition}
\usepackage{booktabs} 
\usepackage{tikz} 
\usepackage{microtype}
\usepackage{algorithm}
\usepackage{algorithmic}
\usepackage{enumitem}
\usepackage[dvipsnames]{xcolor}
\usepackage{tcolorbox}
\definecolor{darkgreen}{rgb}{0.05, 0.5, 0.25}
\newenvironment{greenbox}[1][]{
    \begin{tcolorbox}[
        left=5pt, right=5pt,  
        colback=darkgreen!1,     
        colframe=darkgreen!150,    
        colbacktitle=darkgreen!10,
        coltitle=black,      
        fonttitle=\bfseries, 
        title=%
         {\centering \if\relax\detokenize{#1}\relax
          \else #1\fi }     
    ]
}{
    \end{tcolorbox}
}

\title{KMM-CP: Practical Conformal Prediction under Covariate Shift via Selective Kernel Mean Matching}

\author[1]{\href{mailto:<sl160@illinois.edu>}{Siddhartha Laghuvarapu}{}}
\author[1]{Rohan Deb}
\author[1]{Jimeng Sun}
\affil[1]{%
    Siebel School of Computing and Data Science, University of Illinois Urbana-Champaign, IL, USA.\\
}
  
\begin{document}
\maketitle

\begin{abstract}
    Uncertainty quantification is essential for deploying machine learning models in high-stakes domains such as scientific discovery and healthcare. Conformal Prediction (CP) provides finite-sample coverage guarantees under exchangeability, an assumption often violated in practice due to distribution shift. Under covariate shift, restoring validity requires importance weighting, yet accurate density-ratio estimation becomes unstable when training and test distributions exhibit limited support overlap.
    We propose KMM-CP, a conformal prediction framework based on Kernel Mean Matching (KMM) for covariate-shift correction. We show that KMM directly controls the bias–variance components governing conformal coverage error by minimizing RKHS moment discrepancy under explicit weight constraints, and establish asymptotic coverage guarantees under mild conditions. We then introduce a selective extension that identifies regions of reliable support overlap and restricts conformal correction to this subset, further improving stability in low-overlap regimes. Experiments on molecular property prediction benchmarks with realistic distribution shifts show that KMM-CP reduces coverage gap by over 50\% compared to existing approaches. The code is available at \url{https://github.com/siddharthal/KMM-CP}{}

\end{abstract}

\section{Introduction}
Reliable uncertainty quantification is essential for deploying machine learning models in high-stakes domains such as scientific discovery and healthcare. Conformal Prediction (CP) \cite{vovk2005} provides a principled framework for constructing prediction sets with finite-sample marginal coverage guarantees under minimal distributional assumptions. In standard split conformal prediction, a trained model is evaluated on a held-out \emph{calibration set}, whose prediction residuals are used to determine a quantile threshold for forming prediction sets. These guarantees rely on exchangeability between calibration and test data—an assumption that is frequently violated in real-world settings due to distribution shift \cite{tibshirani2019,barber2023conformal}. Extensions of conformal prediction to non-exchangeable or distribution-shift settings have been studied in recent work.

A common and practically relevant form of distribution shift is \emph{covariate shift}, where the marginal distribution of inputs changes while the conditional distribution $Y \mid X$ remains stable \cite{sugiyama2007covariate}. This setting is especially relevant in scientific discovery, where the data-generating process is consistent under comparable experimental conditions \cite{laghuvarapu2023codrug,fannjiang2022conformal}. Under covariate shift, weighted conformal prediction restores validity by reweighting calibration samples using the density ratio between test and calibration covariate distributions \cite{tibshirani2019}. When the ratio is known or accurately estimated, coverage guarantees are recovered.

In practice, density ratio estimation can be unstable, particularly in moderate or high-dimensional settings and under limited support overlap. Ratios may be estimated via probabilistic classification \cite{bickel2009discriminative}, direct density-ratio fitting methods such as KLIEP or LSIF \cite{sugiyama2008kliep,kanamori2009least}, or kernel density estimation \cite{silverman1986density}. Classifier-based methods depend on well-calibrated probability estimates and can yield extreme weights when the test distribution assigns mass to regions poorly represented in calibration data, while kernel density approaches suffer from the curse of dimensionality. In both cases, heavy-tailed or highly concentrated importance weights reduce the effective sample size (ESS), leading to unstable quantile estimation and degraded coverage.

In this work, we propose \emph{KMM-CP}, a framework for conformal prediction under covariate shift based on Kernel Mean Matching (KMM) \cite{huang2007correcting}. Rather than explicitly estimating density ratios, KMM aligns the weighted calibration distribution with the test covariate distribution in a reproducing kernel Hilbert space (RKHS) by minimizing the Maximum Mean Discrepancy (MMD) \cite{gretton2012kernel}. The resulting weights are obtained via a constrained optimization that enforces boundedness, thereby controlling variance and improving stability in high-dimensional settings. Although KMM-based reweighting has appeared in recent work \cite{almeida2025high}, its connection to conformal coverage has not been systematically analyzed. In particular, prior work does not explicitly relate moment-matching quality and effective sample size to coverage behavior, nor compare KMM with alternative density-ratio estimators from a stability perspective.

We further introduce a selective extension of KMM for low-overlap or disjoint-support settings. When test regions lack calibration support, enforcing global moment matching can induce extreme weights and collapse the ESS. Our formulation jointly optimizes calibration weights and target selection variables, restricting correction to regions of shared support. This improves the bias–variance tradeoff and limits uncertainty quantification to regimes where shift correction is reliable. 

Empirically, we evaluate KMM-CP on molecular property prediction tasks with substantial covariate shift. In high-dimensional settings, our method consistently reduces coverage gap and improves efficiency relative to classifier- and kernel density–based baselines, demonstrating robustness in low-overlap regimes. Our main contributions are as follows.
\begin{greenbox}[Main Contributions]
\begin{enumerate}[itemsep=0pt, topsep=0pt]
    \item We propose KMM-CP, a practical framework for conformal prediction under covariate shift that uses Kernel Mean Matching (KMM) to align calibration and test distributions via kernel mean embeddings, improving stability without explicit density estimation.
    \item We introduce a selective extension for low-overlap regimes that restricts correction to regions of shared support, improving effective sample size and calibration reliability.
    \item We provide analysis connecting moment-matching quality and effective sample size to conformal coverage under distributional shift, explaining the improved stability of KMM-based reweighting.
    \item We evaluate KMM-CP on molecular property prediction benchmarks with substantial covariate shift, achieving over 50\% reduction in coverage gap compared to existing baselines.
\end{enumerate}
\end{greenbox}
\section{Preliminaries}
\subsection{Conformal Prediction Framework}
Conformal Prediction (CP) is a framework for constructing prediction sets with finite-sample coverage guarantees.
We consider a classification setup where each data point is $Z = (X, Y)$ with $X \in \mathbb{R}^d$ and $Y \in [K] = \{1, \ldots, K\}$.
Beyond a point estimate from a base classifier $f$, we want a confidence level encoded as a prediction set $\hat{C}(X) \subseteq [K]$.
The main goal is \emph{valid coverage}: for a target level $\alpha \in (0, 1)$ (e.g.\ 0.1), the set should contain the true label with at least $1-\alpha$ probability.
Formally, for a new test point $(X_{N+1}, Y_{N+1})$, $\hat{C}$ is $(1-\alpha)$-valid if
\begin{equation}
\label{eq:valid-coverage}
\mathbb{P}\bigl\{ Y_{N+1} \in \hat{C}(X_{N+1}) \bigr\} \geq 1 - \alpha.
\end{equation}

\paragraph{Split conformal prediction.}
In practice, the coverage guarantee in \eqref{eq:valid-coverage} is typically attained using split (inductive) conformal prediction, which partitions the data into a training set and a calibration set $D_{\text{cal}}$.
A base predictor is fit on the training set, and conformity scores on the calibration set are used to choose a threshold so that the resulting set-valued predictor attains the target coverage.
Under exchangeability of the calibration and test points, the procedure is distribution-free and achieves the validity condition~\eqref{eq:valid-coverage}; see \citet[Theorem 2.1]{vovk2005}.

\subsection{Conformal Prediction under Covariate Shift}

Standard conformal prediction assumes calibration and test data are i.i.d.\ (or exchangeable), an assumption that is rarely realistic in practice. In many scientific settings, however, while the marginal distribution of covariates $X$ may shift between calibration and test time, the conditional distribution $Y \mid X$ remains stable. For example, molecular properties such as target activity or physicochemical characteristics are governed by underlying physical laws and typically remain consistent under comparable experimental conditions. This motivates the \emph{covariate shift} setting, where the marginal law of $X$ changes but the conditional law of $Y \mid X$ is invariant.

Formally, let $D_{\mathrm{cal}}=\{(X_i,Y_i)\}_{i=1}^N$ denote the calibration sample and $(X_{N+1},Y_{N+1})$ the test point. Under covariate shift, $P^{\mathrm{cal}}_{Y\mid X}=P^{\mathrm{test}}_{Y\mid X}$, so the joint laws decompose as
\begin{align}
\label{eq:cal-dist}
P^{\mathrm{cal}}(dx,dy) &= P_X^{\mathrm{cal}}(dx)\,P_{Y\mid X}^{\mathrm{cal}}(dy\mid x), \\
\label{eq:test-dist}
P^{\mathrm{test}}(dx,dy) &= P_X^{\mathrm{test}}(dx)\,P_{Y\mid X}^{\mathrm{cal}}(dy\mid x).
\end{align}
The calibration data satisfy $(X_i,Y_i)\stackrel{\mathrm{i.i.d.}}{\sim}P^{\mathrm{cal}}$, while $(X_{N+1},Y_{N+1})\sim P^{\mathrm{test}}$.



\paragraph{Importance weighting under covariate shift.}
Under covariate shift, one typically assumes $P_X^{\text{test}} \ll P_X^{\text{cal}}$, so the likelihood ratio
\[
w(x) := \frac{dP_X^{\text{test}}}{dP_X^{\text{cal}}}(x)
\]
is well-defined. Tibshirani et al.~\citep{tibshirani2019} view this as a special case of \emph{weighted exchangeability}:
independent (but not identically distributed) samples are weighted exchangeable with weights given by appropriate
Radon--Nikodym derivatives \citep[Lemma~2]{tibshirani2019}. In particular, if $Z_1,\dots,Z_N \sim P^{\text{cal}}$ and
$Z_{N+1}\sim P^{\text{test}}$, then one may take $w_i \equiv 1$ for $i\le N$ and $w_{N+1}(x,y)=w(x)$.
Weighted conformal prediction then forms a cutoff as a \emph{weighted quantile} of the calibration scores, where the
weight $w(x)$ of the candidate test point enters through the weighted conformal rank/quantile construction
\citep[Lemma~3 and Theorem~2]{tibshirani2019}. Using this weighted cutoff (instead of the usual unweighted cutoff)
yields prediction sets with the desired marginal coverage under $P^{\text{test}}$.

\begin{theorem}[Coverage under covariate shift {\citep[Theorem~2]{tibshirani2019}}]
\label{thm:weighted-cp}
Assume the covariate shift model \eqref{eq:cal-dist}--\eqref{eq:test-dist} and $P_X^{\text{test}} \ll P_X^{\text{cal}}$.
Let $w(x)=\frac{dP_X^{\text{test}}}{dP_X^{\text{cal}}}(x)$, and construct the weighted conformal prediction set $\hat C$
using calibration nonconformity scores and the weighted-quantile rule of \citet[Theorem~2]{tibshirani2019} with
weights $w_i\equiv 1$ for $i\le N$ and $w_{N+1}(x,y)=w(x)$.
Then
\[
\mathbb{P}_{\text{test}}\{Y_{N+1}\in \hat C(X_{N+1})\}\ge 1-\alpha,
\]
where the probability is over $(X_{N+1},Y_{N+1})\sim P^{\text{test}}$ (and the randomness in the calibration sample).
\end{theorem}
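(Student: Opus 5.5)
The plan is to reproduce the weighted-exchangeability argument of \citet{tibshirani2019}. Write $Z_i=(X_i,Y_i)$ and let $Z_{N+1}$ be the test point. The first step is to check that the joint law of $(Z_1,\dots,Z_{N+1})$ is weighted exchangeable. By \eqref{eq:cal-dist}--\eqref{eq:test-dist} and $P_X^{\text{test}}\ll P_X^{\text{cal}}$, its density with respect to $(P^{\text{cal}})^{\otimes(N+1)}$ is $\prod_{i\le N} 1\cdot w(x_{N+1})$, which has the form $\prod_{i=1}^{N+1} w_i(z_i)\, g(z_1,\dots,z_{N+1})$ with $g\equiv 1$ symmetric, $w_i\equiv 1$ for $i\le N$ and $w_{N+1}(x,y)=w(x)$. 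The change of measure uses only the covariate marginal, because the conditional $Y\mid X$ is shared.

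Second, condition on the unordered multiset $E_z=\{z_1,\dots,z_{N+1}\}$ of realized values. Under weighted exchangeability, the conditional probability that the test point takes the value $z_j$ is $p_j \propto \sum_{\sigma:\sigma(N+1)=j}\prod_i w_i(z_{\sigma(i)})$. With the weights above this reduces to $p_j=w(x_j)/\sum_{k=1}^{N+1} w(x_k)$. Consequently, conditional on $E_z$, the test score $V_{N+1}=s(X_{N+1},Y_{N+1})$ is distributed as $\sum_{j} p_j\,\delta_{V_j}$, where $V_j$ is the score of $z_j$ and the score function $s$ is fixed because it was trained on independent data.

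Third, the weighted-quantile rule sets the threshold to the $(1-\alpha)$ quantile of $\sum_{i\le N} p_i(x)\delta_{V_i}+p_{N+1}(x)\delta_{+\infty}$ with $x=X_{N+1}$. Replacing $+\infty$ by $V_{N+1}$ gives the conditional distribution from the second step. By the quantile inequality $\mathbb{P}\{V\le \mathrm{Quantile}(1-\alpha;F)\}\ge 1-\alpha$ for $V\sim F$, we get $\mathbb{P}\{V_{N+1}\le \hat q\mid E_z\}\ge 1-\alpha$.

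Finally, the event $\{V_{N+1}\le \hat q\}$ coincides with $\{Y_{N+1}\in\hat C(X_{N+1})\}$, since $\hat q$ is unchanged when $V_{N+1}$ is replaced by $+\infty$; this is the monotonicity of the quantile in one atom. Taking expectations over $E_z$ then yields the stated bound.

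I expect the main obstacle to be the second step. It requires computing the conditional law given $E_z$ rigorously by summing over permutations, and handling ties among scores and $w=0$ sets, which needs a little measure-theoretic care. The remaining steps are routine.
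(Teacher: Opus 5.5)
Your proposal is correct and is essentially the argument of \citet{tibshirani2019} that the paper relies on; the paper states this theorem without a proof of its own. You follow their route: weighted exchangeability as in their Lemma~2, then the conditional law given $E_z$ and the $\delta_{+\infty}$ replacement behind their Lemma~3. One small fix in your final step: the quantile itself can change (it can only increase) when $V_{N+1}$ is replaced by $+\infty$, so the correct claim is that the event $\{V_{N+1}\le \hat q\}$ is unchanged, and for the lower bound the inclusion given by that monotonicity is all you need.
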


\begin{remark}
The guarantee above is exact when $w$ is known. If $w$ is replaced by an estimate $\hat w$ (e.g., learned from unlabeled test covariates), finite-sample distribution-free coverage is no longer guaranteed and instead depends on the accuracy of $\hat w$ \citep{tibshirani2019}.
\end{remark}

\subsection{Density Ratio Estimation}

In practice, estimating the density ratio under covariate shift is non-trivial. 
Let $p_s$ and $p_t$ denote the source (calibration) and target (test) marginal densities of $X$, respectively, and define the density ratio
\[
w(x)=\frac{p_t(x)}{p_s(x)}.
\]
A common strategy is \emph{classifier-based density-ratio estimation}, where a probabilistic classifier distinguishes source ($D=0$) from target ($D=1$) samples. Let $\eta(x)=\mathbb{P}(D=1\mid X=x)$ and $\pi_s,\pi_t$ denote class priors. Then
\[
w(x)=\frac{\pi_s}{\pi_t}\frac{\eta(x)}{1-\eta(x)}.
\]
Another approach is \emph{kernel density estimation} (KDE), which separately estimates $p_s$ and $p_t$ via
\[
\hat p_s(x)=\frac{1}{n_s}\sum_{i=1}^{n_s} K_h(x-x_i^{(s)}), 
\quad
\hat p_t(x)=\frac{1}{n_t}\sum_{i=1}^{n_t} K_h(x-x_i^{(t)}),
\]
and forms $\hat w(x)=\hat p_t(x)/\hat p_s(x)$.

Classifier-based methods can be sensitive to probability calibration and may produce high-variance importance weights under limited support overlap \citep{shimodaira2000,sugiyama2012}, while KDE-based approaches suffer from the curse of dimensionality in moderate or high dimensions \citep{wand1995,silverman1986}.  We instead employ \emph{Kernel Mean Matching} (KMM), which estimates importance weights by directly matching kernel mean embeddings of the weighted source and target distributions, avoiding explicit density estimation \citep{huang2007}.

\begin{figure*}[ht]
  \centering
  \includegraphics[width=\textwidth]{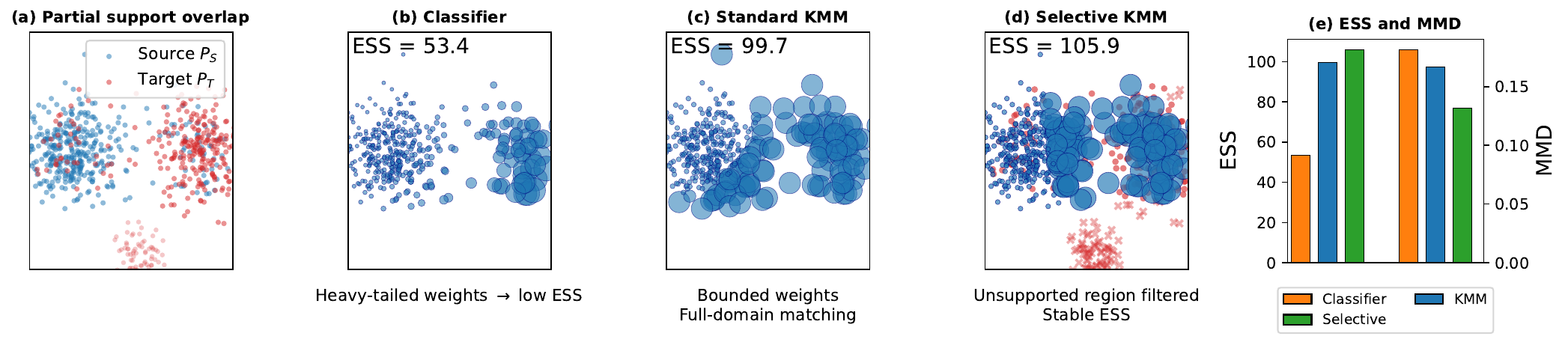}
\caption{Synthetic experiment: (a) Covariate shift with partial support overlap, the target includes regions poorly supported by the source. 
(b) Classifier-based density-ratio assigns heavy-tailed weights to low-support regions, reducing effective sample size (ESS). 
(c) Standard KMM enforces bounded weights and matches moments over the full domain, improving stability but still correcting unsupported regions. 
(d) Selective KMM jointly optimizes source weights and target selection variables, filtering unsupported regions and stabilizing weights. 
(e) Classifier weighting reduces ESS, KMM lowers moment discrepancy (MMD), and selective KMM achieves both stable ESS and improved moment matching.}  
\label{fig:synthetic-covariate-shift}
\end{figure*}

\section{Kernel Mean Matching}
\subsection{Kernel Mean Matching under Covariate Shift}
~\label{sec:kmm}
Let $\{x_i\}_{i=1}^n \sim P_S$ and $\{z_j\}_{j=1}^m \sim P_T$ denote source and target samples, related by a covariate shift. Kernel Mean Matching (KMM) \citep{NIPS2006_a2186aa7} seeks to correct this shift by constructing nonnegative weights $w_1,\dots,w_n$ such that the reweighted empirical source measure
\begin{equation}
P_S^w := \frac{1}{n}\sum_{i=1}^n w_i\,\delta_{x_i}
\end{equation}
approximates (after normalization) $P_T$. Let $\mathcal{H}$ be a Reproducing Kernel Hilbert Space (RKHS) with a characteristic kernel $k$ and feature map $\varphi$. For probability measures $Q$
and $R$ on $\mathbb{R}^d$, define the associated MMD by
\begin{align*}
\mathrm{MMD}(Q,R)
:= \sup_{\|h\|_{\mathcal{H}}\le 1}
\left|
\mathbb{E}_{X\sim Q}[h(X)]
-
\mathbb{E}_{X\sim R}[h(X)]
\right|.
\end{align*}

KMM solves the empirical Maximum Mean Discrepancy (MMD) minimization problem:
\begin{equation}
\min_{w}
\left\|
\frac{1}{n}\sum_{i=1}^n w_i \varphi(x_i)
- \frac{1}{m}\sum_{j=1}^m \varphi(z_j)
\right\|_{\mathcal{H}}^2
\end{equation}
subject to the following for some $\epsilon < 1$
\begin{equation}
0 \le w_i \le B,
\qquad
\left|
\frac{1}{n}\sum_{i=1}^n w_i - 1
\right| \le \epsilon.
\end{equation}



\subsection{Coverage Guarantees under KMM Reweighting}




Let $s:\mathbb{R}^d\times [K]\to \mathbb{R}$ be a conformity score. For $t\in\mathbb{R}$ define the conditional score CDF
\begin{align*}
g_t(x) := \mathbb{P}\bigl(s(x,Y)\le t \,\big|\, X=x\bigr),
\end{align*}
where $(X,Y)\sim P_S$ (equivalently $(X,Y)\sim P_T$) since $P^S_{Y\mid X}=P^T_{Y\mid X}$. The target (population) CDF
of the score is
\begin{align*}
F_T(t)
&:= \mathbb{P}_{(X,Y)\sim P_T}\bigl(s(X,Y)\le t\bigr)
= \mathbb{E}_{X\sim P^T_X}\bigl[g_t(X)\bigr].
\end{align*} Given weights $w_i$ from KMM, define 
$\tilde w_i := \frac{w_i}{\sum_{j=1}^n w_j},
\sum_{i=1}^n \tilde w_i = 1,$
and the corresponding weighted empirical covariate measure
$\widehat P^{\,w}_{S,X} := \sum_{i=1}^n \tilde w_i\,\delta_{x_i}.$
This weighted empirical covariate measure induces a reweighted (population) score CDF 
\begin{align*}
F_w(t)
:= \mathbb{E}_{X\sim \widehat P^{\,w}_{S,X}}\bigl[g_t(X)\bigr]
= \sum_{i=1}^n \tilde w_i\, g_t(x_i).
\end{align*}
The first step toward coverage analysis is to control the population discrepancy $\sup_t |F_w(t)-F_T(t)|$ by the MMD discrepancy between the covariate measures.

\begin{theorem}[\textbf{CDF stability under MMD control}]
\label{thm:cdf-stability}
Assume that for each $t\in\mathbb{R}$ there exists $\tilde g_t\in\mathcal{H}$
such that
\begin{align*}
\sup_{t\in\mathbb{R}} \|\tilde g_t\|_{\mathcal{H}} \le B_{\mathcal{H}},
\qquad
\sup_{t\in\mathbb{R}} \|g_t-\tilde g_t\|_{\infty} \le \varepsilon_{\infty}.
\end{align*}
Then, for any probability measure $Q$ on $\mathbb{R}^d$,
\begin{align*}
\sup_{t\in\mathbb{R}}
\Big|
\mathbb{E}_{X\sim Q}[g_t(X)]
&-
\mathbb{E}_{X\sim P^T_X}[g_t(X)]
\Big| \\
&\le
B_{\mathcal{H}}\,\mathrm{MMD}(Q,P^T_X)
+
2\,\varepsilon_{\infty}.
\end{align*}
In particular, with $Q=\widehat P^{\,w}_{S,X}$ we obtain
\begin{align*}
\sup_{t\in\mathbb{R}} |F_w(t)-F_T(t)|
\le
B_{\mathcal{H}}\,\mathrm{MMD}(\widehat P^{\,w}_{S,X},P^T_X)
+
2\,\varepsilon_{\infty}.
\end{align*}
\end{theorem}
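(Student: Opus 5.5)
The plan is a triangle-inequality decomposition that replaces $g_t$ by its RKHS surrogate $\tilde g_t$, followed by the variational definition of MMD. Fix $t\in\mathbb{R}$ and a probability measure $Q$. First, $g_t$ takes values in $[0,1]$ and $\tilde g_t$ is within $\varepsilon_\infty$ of it in sup norm. Hence $\tilde g_t$ is bounded, and therefore integrable under both $Q$ and $P^T_X$. If measurability of $\tilde g_t$ is a concern, it follows from continuity of elements of $\mathcal{H}$ when $k$ is continuous, which I will assume as is standard. Then write
\begin{align*}
\mathbb{E}_Q[g_t]-\mathbb{E}_{P^T_X}[g_t]
&= \mathbb{E}_Q[g_t-\tilde g_t] + \bigl(\mathbb{E}_Q[\tilde g_t]-\mathbb{E}_{P^T_X}[\tilde g_t]\bigr) \\
&\quad + \mathbb{E}_{P^T_X}[\tilde g_t-g_t].
\end{align*}

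The first and third terms are each bounded in absolute value by $\|g_t-\tilde g_t\|_\infty\le\varepsilon_\infty$, since $Q$ and $P^T_X$ are probability measures. Together they give the $2\varepsilon_\infty$ term.

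For the middle term, I use homogeneity of the MMD supremum. If $\tilde g_t=0$ the term vanishes. Otherwise set $h=\tilde g_t/\|\tilde g_t\|_{\mathcal{H}}$, which lies in the unit ball. The definition of $\mathrm{MMD}(Q,P^T_X)$ then gives
\[
\bigl|\mathbb{E}_Q[\tilde g_t]-\mathbb{E}_{P^T_X}[\tilde g_t]\bigr| \le \|\tilde g_t\|_{\mathcal{H}}\,\mathrm{MMD}(Q,P^T_X) \le B_{\mathcal{H}}\,\mathrm{MMD}(Q,P^T_X).
\]
Adding the three bounds, the resulting estimate does not depend on $t$, so taking the supremum over $t$ yields the first claim.

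For the second claim, specialize to $Q=\widehat P^{\,w}_{S,X}$. This is a genuine probability measure because the normalized weights $\tilde w_i$ are nonnegative and sum to one. The normalization is well defined since the constraint $|\frac1n\sum w_i-1|\le\epsilon<1$ forces $\sum_j w_j>0$. By definition $\mathbb{E}_Q[g_t]=\sum_i\tilde w_i g_t(x_i)=F_w(t)$, and $\mathbb{E}_{P^T_X}[g_t]=F_T(t)$ by the tower property under covariate shift. Substituting these identifications gives the stated bound.

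The only step needing care is the middle term. One must check that the MMD supremum, stated for $\|h\|_{\mathcal{H}}\le1$, scales to $\tilde g_t$, and that expectations of RKHS functions are finite. Finiteness follows from boundedness of $\tilde g_t$ as noted above, or alternatively from a bounded kernel via $|h(x)|\le\|h\|_{\mathcal{H}}\sqrt{k(x,x)}$. Everything else is routine.
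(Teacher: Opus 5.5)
Your proposal is correct and follows essentially the same route as the paper: you add and subtract $\tilde g_t$, bound the two approximation terms by $\varepsilon_\infty$ each, and control the middle term via the normalization $\tilde g_t/\|\tilde g_t\|_{\mathcal{H}}$ and the MMD definition before taking the supremum over $t$. Your extra remarks on integrability and on $\sum_j w_j>0$ (so that $\widehat P^{\,w}_{S,X}$ is a probability measure) are sound details that the paper leaves implicit.
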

Using this we can show the following.


\begin{theorem}[\textbf{Coverage error for KMM-weighted split conformal}]
\label{thm:coverage-decomposition}
Let $(X_1,Y_1),\dots,(X_n,Y_n)\stackrel{\mathrm{i.i.d.}}{\sim}P_S$ be a calibration sample, let
$Z_1,\dots,Z_m\stackrel{\mathrm{i.i.d.}}{\sim}P_X^T$ denote unlabeled target covariates used to compute the KMM weights, and let $(X_{n+1},Y_{n+1})\sim P_T$
be an independent test point. Define
$\widehat F_w(t) := \sum_{i=1}^n \tilde w_i\,\mathbf{1}\{s(X_i,Y_i)\le t\},$
and $\mathrm{ESS}:=1/\sum_{i=1}^n \tilde w_i^2$.
Let the weighted $(1-\alpha)$-quantile be
$\widehat q_{1-\alpha} := \inf\bigl\{t\in\mathbb{R}:\widehat F_w(t)\ge 1-\alpha\bigr\},$
and define the set-valued predictor
\begin{align*}
\widehat C(x) := \bigl\{y\in[K]: s(x,y)\le \widehat q_{1-\alpha}\bigr\}.
\end{align*}
Assume that with probability $1$ the calibration scores $s(X_1,Y_1),\dots,s(X_n,Y_n)$ are all distinct and that the conditions of Theorem~\ref{thm:cdf-stability} hold with constants $B_{\mathcal{H}}$ and
$\varepsilon_{\infty}$.
Then for any $\delta\in(0,1)$, conditional on $X_1,\dots,X_n,Z_1,\dots,Z_m$, with probability
at least $1-\delta$ over $Y_1,\dots,Y_n$ we have
\begin{align*}
&\Bigl|
\mathbb{P}_{(X,Y)\sim P_T}\bigl(Y\in \widehat C(X)\bigr) - (1-\alpha)
\Bigr| \\
&\!\!\le
B_{\mathcal{H}}\,\mathrm{MMD}_{\mathcal{H}}(\widehat P^{\,w}_{S,X},P_X^T)
+
2\,\varepsilon_{\infty} \! + \!
\left(5+\sqrt{\tfrac{1}{2}\log\tfrac{1}{\delta}}\right)\sqrt{\tfrac{1}{\mathrm{ESS}}}.
\end{align*}
\end{theorem}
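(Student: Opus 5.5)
Conditional on $X_1,\dots,X_n,Z_1,\dots,Z_m$, the weights $\tilde w_i$ are fixed and the labels $Y_i$ are independent with $Y_i\sim P_{Y\mid X=X_i}$. The coverage of $\widehat C$ for a fresh test point equals $F_T(\widehat q_{1-\alpha})$, since $Y\in\widehat C(X)$ iff $s(X,Y)\le \widehat q_{1-\alpha}$. I would therefore split
\begin{align*}
F_T(\widehat q) - (1-\alpha)
&= \bigl[F_T(\widehat q)-F_w(\widehat q)\bigr]
+ \bigl[F_w(\widehat q)-\widehat F_w(\widehat q)\bigr] \\
&\quad + \bigl[\widehat F_w(\widehat q)-(1-\alpha)\bigr].
\end{align*}
The first bracket is at most $B_{\mathcal H}\,\mathrm{MMD}(\widehat P^{\,w}_{S,X},P^T_X)+2\varepsilon_\infty$ uniformly in $t$, by Theorem~\ref{thm:cdf-stability} applied with $Q=\widehat P^{\,w}_{S,X}$. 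The third bracket is controlled by the quantile definition together with distinct scores: $\widehat F_w$ jumps at $\widehat q$ by a single weight $\tilde w_i$, so $0\le \widehat F_w(\widehat q)-(1-\alpha)\le \max_i \tilde w_i\le \sqrt{\sum_i\tilde w_i^2}=\mathrm{ESS}^{-1/2}$.

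The main step is a uniform concentration bound for $\sup_t|\widehat F_w(t)-F_w(t)|$. This is a weighted empirical process of independent, non-identically distributed indicators $\mathbf 1\{s(X_i,Y_i)\le t\}$, whose means are $g_t(X_i)$. I would proceed in two stages.

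First, for the expectation, I would bound $\mathbb E\sup_t|\widehat F_w-F_w|$. Symmetrization gives $2\,\mathbb E\sup_t|\sum_i\tilde w_i\epsilon_i\mathbf 1\{S_i\le t\}|$. Since the scores are distinct, the indicator vectors over $t$ range over only $n+1$ nested "prefix" patterns once the $S_i$ are sorted. The supremum is then a maximum of partial sums of $\tilde w_i\epsilon_i$ in sorted order. A Lévy/Doob-type maximal inequality bounds this by a constant times $\sqrt{\sum_i\tilde w_i^2}$. Sorting depends on the $Y$'s, but conditional on the scores the Rademachers are independent, so this is fine. With care, the constants should total $4\,\mathrm{ESS}^{-1/2}$.

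Second, for the deviation, I would apply McDiarmid's inequality. Changing one $Y_i$ alters the supremum by at most $\tilde w_i$. This gives, with probability $1-\delta$, an excess of at most $\sqrt{\tfrac12\log(1/\delta)\sum_i\tilde w_i^2}$.

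Summing the three brackets gives $\bigl(4+1+\sqrt{\tfrac12\log\tfrac1\delta}\bigr)\mathrm{ESS}^{-1/2}$, plus the MMD and $\varepsilon_\infty$ terms. This bound holds in both directions, which yields the absolute value.

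I expect the main obstacle to be getting the expected-supremum constant down to $4$ without a chaining log factor. I would first try the prefix-sum reduction combined with Doob's $L^2$ maximal inequality for the Rademacher martingale, which yields $\mathbb E\max_k|M_k|\le 2\sqrt{\sum\tilde w_i^2}$. After symmetrization's factor of $2$, this gives exactly $4$. If that fails, a DKW-style weighted bound would serve as the fallback.
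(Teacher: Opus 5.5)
Your proposal is correct and follows the paper's proof essentially step for step. It uses the same split into the bias term $\sup_t|F_T-F_w|$ (handled by Theorem~\ref{thm:cdf-stability}), the empirical-process term $\sup_t|F_w-\widehat F_w|$ (controlled by symmetrization, reduction to sorted prefix sums, Doob's $L^2$ maximal inequality giving the constant $4$, and McDiarmid with bounded differences $\tilde w_i$), and the quantile overshoot bounded by $\max_i\tilde w_i\le\mathrm{ESS}^{-1/2}$.
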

\begin{corollary}[Asymptotic validity of KMM-weighted split conformal]
\label{cor:asymptotic-coverage}
Assume the conditions of Theorem~\ref{thm:coverage-decomposition}. Allow the approximation error in
Theorem~\ref{thm:cdf-stability} to depend on $(n,m)$, and write it as $\varepsilon_{\infty,n,m}$.
If, as $n,m\to\infty$,
\begin{align*}
\mathrm{MMD}_{\mathcal{H}}\bigl(\widehat P^{\,w}_{S,X},P_X^T\bigr)\to 0,
\mathrm{ESS}\to\infty,
\varepsilon_{\infty,n,m}\to 0,
\end{align*}
in probability, then the following holds in probability
\begin{align*}
\mathbb{P}_{(X,Y)\sim P_T}\bigl(Y\in \widehat C(X)\bigr)\to 1-\alpha
\end{align*}

\end{corollary}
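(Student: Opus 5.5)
We will deduce the corollary directly from the finite-sample bound of Theorem~\ref{thm:coverage-decomposition}. Write $\Delta_{n,m} := \bigl|\mathbb{P}_{(X,Y)\sim P_T}(Y\in\widehat C(X)) - (1-\alpha)\bigr|$; note that $\Delta_{n,m}$ is a random variable, because $\widehat C$ depends on the calibration sample and on $Z_1,\dots,Z_m$. We must show that $\mathbb{P}(\Delta_{n,m}>\eta)\to 0$ for every $\eta>0$. Define the right-hand side of the theorem as
\begin{align*}
R_{n,m}(\delta) := B_{\mathcal H}\,\mathrm{MMD}_{\mathcal H}(\widehat P^{\,w}_{S,X},P_X^T) + 2\varepsilon_{\infty,n,m} + \Bigl(5+\sqrt{\tfrac12\log\tfrac1\delta}\Bigr)\mathrm{ESS}^{-1/2}.
\end{align*}
Here $B_{\mathcal H}$ is held fixed, or at least bounded, along the sequence. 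Only $\varepsilon_\infty$ is allowed to vary with $(n,m)$.

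The key step is to pass from the conditional high-probability statement to an unconditional one. Fix $\eta>0$ and $\delta\in(0,1)$. Conditional on $(X_{1:n},Z_{1:m})$, Theorem~\ref{thm:coverage-decomposition} gives $\mathbb{P}(\Delta_{n,m}>R_{n,m}(\delta)\mid X_{1:n},Z_{1:m})\le\delta$. Taking expectations over $(X_{1:n},Z_{1:m})$ (tower property) yields $\mathbb{P}(\Delta_{n,m}>R_{n,m}(\delta))\le\delta$. Hence
\begin{align*}
\mathbb{P}(\Delta_{n,m}>\eta) \le \delta + \mathbb{P}\bigl(R_{n,m}(\delta)>\eta\bigr).
\end{align*}
For fixed $\delta$, the factor $5+\sqrt{\tfrac12\log(1/\delta)}$ is a constant. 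The hypotheses give $\mathrm{MMD}\to0$, $\varepsilon_{\infty,n,m}\to0$ and $\mathrm{ESS}^{-1/2}\to0$ in probability; the last follows from $\mathrm{ESS}\to\infty$ by the continuous mapping theorem. Therefore $R_{n,m}(\delta)\to0$ in probability, since a finite sum of terms that vanish in probability also vanishes in probability. Consequently $\limsup\mathbb{P}(\Delta_{n,m}>\eta)\le\delta$, and because $\delta$ is arbitrary the limit is $0$. This gives $\mathbb{P}_{(X,Y)\sim P_T}(Y\in\widehat C(X))\to1-\alpha$ in probability.

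The main subtlety is measurability and the order of quantifiers. The event in Theorem~\ref{thm:coverage-decomposition} must be measurable, so that the tower argument is legitimate. The $\delta$ must be fixed before sending $n,m\to\infty$, because the bound degrades like $\sqrt{\log(1/\delta)}$; one must not try to take $\delta\to0$ simultaneously unless $\mathrm{ESS}$ grows fast enough. The approximation error $\varepsilon_{\infty,n,m}$ is allowed to depend on $(n,m)$, and it may be random if the approximating class is data-dependent, but the hypotheses treat it in probability and the argument above handles that case too. If one wants $B_{\mathcal H}$ to vary as well, the same proof works provided $B_{\mathcal H}\,\mathrm{MMD}\to0$ in probability.
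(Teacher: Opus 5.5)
Your proposal is correct and is essentially the paper's proof: fix $\eta>0$ and $\delta\in(0,1)$, combine the conditional high-probability bound of Theorem~\ref{thm:coverage-decomposition} with the tower property to get $\mathbb{P}(\Delta_{n,m}>\eta)\le\delta+\mathbb{P}(R_{n,m}(\delta)>\eta)$, observe that $R_{n,m}(\delta)\to0$ in probability for fixed $\delta$, and let $\delta\downarrow0$ after taking the $\limsup$. The only cosmetic difference is that the paper bounds the conditional probability by $\mathbf{1}\{R_{n,m}(\delta)>\eta\}$ plus the conditional failure probability inside the expectation, whereas you first remove the conditioning on the failure event and then use $\{\Delta_{n,m}>\eta\}\subseteq\{\Delta_{n,m}>R_{n,m}(\delta)\}\cup\{R_{n,m}(\delta)>\eta\}$; these are the same argument.
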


Theorem~\ref{thm:coverage-decomposition} reveals a clear bias–variance decomposition: the MMD term controls systematic distributional mismatch between the reweighted calibration distribution and the target distribution, while the $1/\sqrt{\mathrm{ESS}}$ term captures variance induced by weight concentration. Accurate covariate-shift correction therefore requires both small moment discrepancy and stable (non-degenerate) weights.

Kernel Mean Matching is well-suited to this objective. Unlike classifier- or KDE-based density-ratio estimation, which can yield heavy-tailed or highly concentrated weights under limited support overlap, KMM minimizes an RKHS discrepancy under explicit box and mass constraints. These constraints control weight variance, preserve effective sample size, and stabilize quantile estimation. As a result, KMM directly targets the two terms governing coverage in our bound—moment alignment and ESS—without explicit density estimation. We further analyze these stability considerations in relation to other density estimation procedures in Appendix ~\ref{app:classifier-instability} and empirically validate them.

\vspace{-0.25 em}
\subsection{Selective Kernel Mean Matching via Joint Target Selection}
\vspace{-0.5 em}
\label{sec:skmm}
When the target distribution $P_T$ is not well-supported by the source distribution $P_S$, the standard density ratio diverges. Forcing the KMM optimizer to match moments over the entire target domain requires extreme source weights, which collapses the Effective Sample Size (ESS) and inflates the variance of our coverage bounds (Theorem~\ref{thm:coverage-decomposition}). 

To address this, we incorporate a selective mechanism that restricts conformal correction to regions of shared support. Specifically, we introduce test-instance selection weights $\alpha_j \in [0,1]$ applied to the target samples $z_j$, alongside the standard source weights $w_i \in [0,B]$ applied to $x_i$. We jointly optimize the source weights and the target selection variables to match the selected target moments. This formulation is related to double-weighting strategies for covariate shift adaptation \cite{segovia2023double}, though our objective and constraints differ and are tailored to conformal calibration rather than risk minimization.
\begin{equation}
\min_{w, \alpha}
\left\|
\frac{1}{n}\sum_{i=1}^n w_i \varphi(x_i)
- \frac{1}{m}\sum_{j=1}^m \alpha_j \varphi(z_j)
\right\|_{\mathcal{H}}^2
\end{equation}
subject to the constraints:
\begin{equation}
\begin{aligned}
0 \le w_i \le B, \qquad
&\left| \frac{1}{n}\sum_{i=1}^n w_i - \frac{1}{m}\sum_{j=1}^m \alpha_j \right| \le \epsilon,\\
0 \le \alpha_j \le 1, \qquad
&\frac{1}{m}\sum_{j=1}^m \alpha_j \ge \tau.
\end{aligned}
\end{equation}
Here, $\tau \in (0,1)$ enforces a acceptance yield (the fraction of target points we wish to make predictions for). The KKT conditions suggest that when gradients clearly favor inclusion or exclusion, many selection variables $\alpha_j$ may approach the boundaries $0$ or $1$ (see Appendix~\ref{app:skmm-theory}). Empirically, we observe such sparsity: the learned $\alpha_j$ often concentrate near the extremes, particularly for target points poorly supported by the source distribution. This induced sparsity directly controls the two terms in the coverage error bound:
\begin{enumerate}
\item \textbf{Bias Reduction (MMD Control):}
By restricting inference to the selected subset, moment matching is concentrated on regions of shared support. In these regions, bounded source weights suffice to approximate the selected target moments, reducing the effective MMD between $P_S^w$ and induced target distribution $\tilde P_T$.

\item \textbf{Variance Reduction ($\mathrm{ESS} \uparrow$):}
Excluding unsupported target points prevents excessive weight concentration on a few samples, yielding a more uniform weight distribution and larger Effective Sample Size (ESS), thereby stabilizing the $C_2 \sqrt{1/\mathrm{ESS}}$ term.
\end{enumerate}


\vspace{-0.5 em}
\section{Kernel Mean Matching Based Conformal Prediction}
\vspace{-0.5 em}

We describe the practical implementation of split conformal prediction under covariate shift using KMM and its selective extension, focusing on classification. We also discuss class-conditional calibration; details are provided in Appendix~\ref{app:mondrian}.

\paragraph{KMM-Based Split Conformal Prediction.}
Given labeled data $\mathcal{D}$ and unlabeled test covariates ${z_j}_{j=1}^m$, we split $\mathcal{D}$ into training and calibration sets, train a predictor on the training split, and compute conformity scores on the calibration data. KMM is then applied between calibration and test covariates to obtain importance weights, which are used to compute a weighted empirical quantile of the calibration scores.

\paragraph{Selective KMM.}
The double-weighted formulation in~\ref{sec:skmm} induces sparsity in the selection variables $\alpha_j$ via complementary slackness, concentrating mass near the boundaries. In finite samples, the solutions cluster near $0$ and $1$ but are not exactly discrete. We therefore adopt a two-stage procedure: first obtain $\alpha_j$ from the double-weighted objective, then threshold the learned $\alpha$ values to retain sufficiently large-$\alpha$ test points, and finally rerun standard KMM on the retained subset. Coverage applies conditionally to the selected test instances under the assumptions of Section~\ref{sec:kmm}. The unified procedure is summarized in Algorithm~\ref{alg:kmm_unified}.

\begin{algorithm}[t]
\caption{Split Conformal Prediction with (Selective) KMM}
\label{alg:kmm_unified}
\begin{algorithmic}[1]
\REQUIRE Labeled data $\mathcal{D}=\{(x_i,y_i)\}_{i=1}^N$, 
test covariates $\{z_j\}_{j=1}^m$, 
miscoverage level $\alpha_{\mathrm{conf}}$, 
kernel $k$, weight bound $B$, threshold $\theta$ (optional)

\ENSURE Prediction sets $\hat C(z_j)$

\STATE Split $\mathcal{D}$ into training and calibration sets $\mathcal{D}_{\mathrm{cal}}=\{(x_i,y_i)\}_{i=1}^n$
\STATE Train predictor $f$ and define conformity score $S(x,y)$
\STATE Compute calibration scores $S_i = S(x_i,y_i)$

\IF{Selective KMM}
    \STATE Solve double-weighted KMM to obtain $\alpha_j$
    \STATE $\mathcal{T}_{\mathrm{sel}} \leftarrow \{z_j : \alpha_j \ge \theta\}$ \quad (or top $\tau m$)
\ELSE
    \STATE $\mathcal{T}_{\mathrm{sel}} \leftarrow \{z_j\}_{j=1}^m$
\ENDIF

\STATE Solve KMM between $\mathcal{D}_{\mathrm{cal}}$ and $\mathcal{T}_{\mathrm{sel}}$ to obtain $w_i$
\STATE Normalize $\tilde w_i = w_i / \sum_{k=1}^n w_k$

\STATE Compute weighted quantile $\hat q_{1-\alpha_{\mathrm{conf}}}$ satisfying
\[
\sum_{i=1}^n \tilde w_i \mathbf{1}\{S_i \le t\} \ge 1-\alpha_{\mathrm{conf}}
\]

\FOR{$z_j \in \mathcal{T}_{\mathrm{sel}}$}
    \STATE $\hat C(z_j) = \{y : S(z_j,y) \le \hat q_{1-\alpha_{\mathrm{conf}}}\}$
\ENDFOR
\end{algorithmic}
\end{algorithm}

\paragraph{Connection to Selective Conformal Inference.}
Our formulation is conceptually related to selective conformal inference \cite{angelopoulos2021learn}, which guarantees coverage on a data-dependent subset of test instances. Classical selective methods typically assume exchangeability and use predictive confidence to define this subset. In contrast, our selection mechanism is driven by support mismatch under covariate shift: guarantees are restricted to regions with stable reweighting and reliable density-ratio estimation.

Selective guarantees are particularly relevant in high-stakes domains. In molecular discovery, conformal prediction is used to prioritize active compounds for experimental validation, where missing truly active molecules is costly; abstaining in low-support regions is therefore practical. Similarly, in healthcare, models may defer out-of-distribution cases to clinicians while maintaining calibrated guarantees on well-supported patient groups.

\vspace{-0.75 em}
\section{Experimental Details}
\vspace{-0.5 em}

In this section, we evaluate whether the proposed methods correct covariate shift and restore conformal coverage on small-molecule property prediction tasks, where conformal prediction is widely used to prioritize compounds for wet-lab validation \cite{astigarraga2025conformal}. These benchmarks derive from real-world drug discovery pipelines \cite{huang2021therapeutics} and are standard testbeds for molecular uncertainty quantification \cite{laghuvarapu2023codrug}.

\vspace{-0.5 em}
\subsection{Datasets}
\vspace{-0.5 em}

We conduct experiments on widely evaluate molecular property prediction benchmarks \cite{fooladi2025evaluating} from Therapeutics Data Commons (TDC) \cite{huang2021therapeutics}.  We evaluate on five classification tasks: \textbf{Tox21} (multi-target toxicity from high-throughput assays), \textbf{AMES} (mutagenicity prediction), \textbf{hERG} (cardiac toxicity via potassium channel inhibition), \textbf{BBB-Martins} (blood--brain barrier penetration), and \textbf{HIV} (bioactivity via inhibition of HIV replication).

These tasks span toxicity and bioactivity prediction and exhibit substantial structural diversity across molecular scaffolds. In such regimes, reliable uncertainty quantification is critical: overconfident predictions on novel compounds can lead to unsafe or inefficient decisions, while overly conservative predictions hinder prioritization. Conformal prediction offers a principled mechanism for controlling predictive uncertainty, making these benchmarks well-suited for evaluating robustness under covariate shift.

\begin{figure*}[t]
  \centering
  \includegraphics[width=0.95\textwidth]{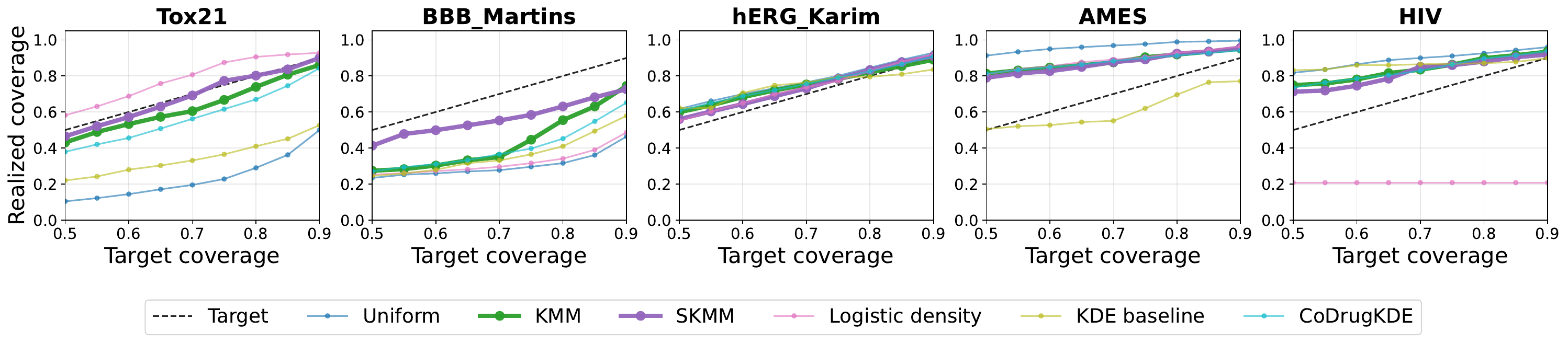}
\caption{Calibration curves under covariate shift (global calibration). The dashed line denotes nominal coverage. Uniform CP undercovers systematically, density-ratio baselines partially correct this, KMM improves alignment, and SKMM most closely tracks the target across datasets.}
\label{fig:calibration-coverage-mondrian}
\vspace{-1em}
\end{figure*}

\subsection{Baselines and Methods}

We compare against the following approaches:
\begin{enumerate}[leftmargin=*, itemsep=1pt, topsep=1pt]
    \item \textbf{Vanilla Conformal Prediction.} Standard split conformal prediction assuming exchangeability.
    \item \textbf{KDE Density Ratio Estimation (64D).} Importance weights via kernel density estimation in 64-dimensional learned feature space.
    \item \textbf{KDE Density Ratio Estimation (8D).} KDE using an 8-dimensional linear projection to assess sensitivity to dimensionality, as suggest by CoDrug\cite{laghuvarapu2023codrug}. We refer to this as \textit{CoDrugKDE}.
    \item \textbf{Logistic Classifier-Based Density Ratio Estimation.} Logistic regression domain classifier distinguishing calibration and test samples.
    \item \textbf{KMM}: Kernel moment matching with bounded importance weights but no selective filtering.
    \item \textbf{SKMM (KMM + $\alpha$ Filtering)} Joint optimization of calibration weights ($\beta$) and target selection variables ($\alpha$), corresponding to our selective KMM framework.
\end{enumerate}

\vspace{-0.25 em}
\subsection{Implementation Details}
\vspace{-0.25 em}

\paragraph{Covariate Shift Construction.}
To simulate realistic structural shift, we consider Fingerprint split \cite{wu2018moleculenet}, following prior work \cite{laghuvarapu2023codrug}. In this split, molecules are represented using binary embeddings (fingerprints) and partitioned based on distance from the dataset centroid in fingerprint space. The 15\% farthest molecules form the test set, while the remainder constitutes the calibration pool. This induces reduced scaffold overlap and support mismatch, reflecting practical discovery settings where new chemical series differ from historical data.

\paragraph{Model Training.}
All predictive models use the AttentiveFP graph neural network architecture~\cite{attentivefp}. Models are trained with five independent random seeds per dataset. Conformal calibration and evaluation are performed separately for each run, and results are averaged.

\paragraph{Weight Construction.}
For weighting-based methods, covariate representations are extracted from the final hidden layer of AttentiveFP (64-dimensional embeddings). These embeddings serve as inputs for density ratio estimation and kernel moment matching, reflecting realistic high-dimensional representation learning regimes. For the \textit{CoDrugKDE} baseline, we evaluate an 8-dimensional linear projection to assess sensitivity to dimensionality.

\paragraph{Conformal Setup.}
Equation~\ref{eq:valid-coverage} describes the standard marginal coverage guarantee of conformal prediction, where coverage holds over the joint distribution of $(X,Y)$. In addition to global calibration, we also evaluate \emph{class-conditional (Mondrian) conformal prediction}, which computes quantile thresholds separately within predefined groups—in our case, class labels. Mondrian conformal prediction guarantees coverage conditional on class membership, i.e.,
$\mathbb{P}\!\left(Y_{N+1} \in \hat C(X_{N+1}) \mid Y_{N+1}=c\right) \ge 1-\alpha$
for each class $c$. For molecular activity tasks, we calibrate on the active class (C1), while for toxicity tasks we calibrate on the non-toxic class (C0), prioritizing reliable coverage for outcomes most critical to downstream drug discovery decisions. Additional details of this are provided in Appendix~\ref{app:mondrian}.

\paragraph{KMM Hyperparameters.}
For kernel mean matching (KMM), the kernel bandwidth is selected using the maximum test power criterion following prior work ~\cite{gretton2012kernel}. The weight upper bound is set to $B = 30$, and the target selection fraction is fixed at $\alpha = 0.5$, corresponding to selecting 50\% of the test molecules. These hyperparameters were not tuned per dataset. 

Additional implementation details and hyperparameters are provided in Appendix~\ref{app:exp}. All reported results are averaged over five independent random runs and detailed error bars are included in Appendix~\ref{app:results}.

\section{Results and Discussion}

\subsection{Coverage Under Covariate Shift}
We first evaluate vanilla (uniform) conformal prediction under both no-shift and fingerprint-based covariate shift. Results are summarized in Table~\ref{tab:uniform_shift}. To quantify calibration error, we report the mean absolute deviation (MAD) from nominal coverage. Specifically, for target coverage levels $1-\alpha \in \{0.5, 0.55, \dots, 0.9\}$,
\[
\mathrm{MAD} = \frac{1}{K} \sum_{k=1}^{K} 
\left| \widehat{\mathrm{cov}}(1-\alpha_k) - (1-\alpha_k) \right|,
\]
where $\widehat{\mathrm{cov}}(1-\alpha_k)$ denotes empirical coverage at level $1-\alpha_k$, and $K$ is the number of evaluated levels. This metric aggregates deviation across coverage levels and provides a measure of miscalibration.

Under random (no-shift) splits, uniform conformal prediction achieves low MAD (e.g., 0.0044 on HIV and 0.0121 on AMES), confirming nominal coverage under exchangeability. Under fingerprint-based covariate shift, calibration deteriorates sharply: MAD exceeds 0.43 on Tox21 and 0.45 on BBB-Martins, and increases by more than an order of magnitude even on milder shifts such as HIV. These results demonstrate that exchangeability violations substantially degrade conformal validity in realistic molecular settings, motivating explicit covariate-shift correction.
\begin{table}[t]
\centering
\caption{Mean absolute deviation (MAD) of uniform conformal prediction under random (no-shift) and shifted splits.}
\label{tab:uniform_shift}
\small
\begin{tabular}{lcc}
\toprule
Dataset & Random & Shifted \\
\midrule
AMES          & \textbf{0.0121} & 0.0844 \\
BBB-Martins   & \textbf{0.0346} & 0.4579 \\
HIV           & \textbf{0.0044} & 0.1830 \\
Tox21         & \textbf{0.0249} & 0.4366 \\
hERG-Karim    & \textbf{0.0254} & 0.0384 \\
\bottomrule
\end{tabular}
\vspace{-1em}
\end{table}

\begin{table*}[t]
\centering
\caption{Mean absolute deviation (MAD) from nominal coverage under covariate shift for global (G) and Mondrian (M) calibration. Lower is better. Best per column is \textbf{bold}; second-best is \textcolor{gray}{\textbf{gray bold}}. SKMM attains the lowest MAD under global calibration and the best overall mean rank across both settings.}
\label{tab:mad}
\small
\setlength{\tabcolsep}{4pt}
\begin{tabular}{l|cc|cc|cc|cc|cc|cc}
\toprule
 & \multicolumn{2}{c|}{Uniform} 
 & \multicolumn{2}{c|}{KDE} 
 & \multicolumn{2}{c|}{CoDrug KDE} 
 & \multicolumn{2}{c|}{Logistic} 
 & \multicolumn{2}{c|}{KMM} 
 & \multicolumn{2}{c}{SKMM} \\
Dataset 
 & G & M 
 & G & M 
 & G & M 
 & G & M 
 & G & M 
 & G & M \\
\midrule
AMES 
 & 0.166 & 0.084 
 & 0.137 & \textbf{0.046} 
 & 0.093 & 0.090 
 & 0.102 & 0.096 
 & \textcolor{gray}{\textbf{0.078}} & 0.079 
 & \textbf{0.065} & \textcolor{gray}{\textbf{0.072}} \\

Tox21 
 & 0.409 & 0.437 
 & 0.323 & 0.331 
 & 0.103 & 0.110 
 & 0.073 & 0.089 
 & \textcolor{gray}{\textbf{0.053}} & \textcolor{gray}{\textbf{0.057}} 
 & \textbf{0.015} & \textbf{0.020} \\

BBB-Martins 
 & 0.424 & 0.458 
 & 0.364 & 0.393 
 & 0.329 & 0.381 
 & 0.396 & 0.443 
 & \textcolor{gray}{\textbf{0.319}} & \textcolor{gray}{\textbf{0.353}} 
 & \textbf{0.213} & \textbf{0.270} \\

hERG 
 & 0.034 & 0.038 
 & 0.096 & 0.094 
 & 0.031 & 0.030 
 & \textcolor{gray}{\textbf{0.020}} & \textcolor{gray}{\textbf{0.022}} 
 & 0.028 & 0.036 
 & \textbf{0.009} & \textbf{0.019} \\

HIV 
 & 0.185 & 0.183 
 & 0.291 & 0.283 
 & 0.145 & 0.151 
 & 0.297 & 0.269 
 & \textcolor{gray}{\textbf{0.087}} & \textcolor{gray}{\textbf{0.098}} 
 & \textbf{0.013} & \textbf{0.022} \\

\midrule
Mean Rank 
 & 5.4 & 5.0 
 & 5.0 & 4.4 
 & 3.4 & 3.6 
 & 4.0 & 4.2 
 & \textcolor{gray}{\textbf{2.2}} & \textcolor{gray}{\textbf{2.6}} 
 & \textbf{1.0} & \textbf{1.2} \\
\bottomrule
\end{tabular}
\end{table*}
\vspace{-1em}

\subsection{Global and Mondrian Calibration}
\vspace{-0.25em}

Table~\ref{tab:mad} reports mean absolute deviation (MAD) from nominal coverage under both global and Mondrian conformal calibration. Uniform conformal prediction performs poorly under covariate shift, exhibiting the highest deviation across nearly all datasets, with mean ranks of 5.4 (global) and 5.0 (Mondrian). In particular, uniform MAD exceeds 0.40 on Tox21 and BBB-Martins, confirming that exchangeability violations severely degrade calibration.

Density-ratio baselines partially mitigate this effect but remain inconsistent. Logistic weighting substantially improves Tox21 (0.073 vs.\ 0.409 under uniform), yet remains unstable on BBB-Martins (0.396) and HIV (0.297). KDE-based approaches are particularly sensitive to the dimensionality of the representation, with MAD exceeding 0.30 on multiple datasets under global calibration. Even the CoDrug KDE variant remains above 0.30 in BBB-Martins and fails to consistently match nominal coverage.

In contrast, KMM (no filtering) consistently reduces deviation across datasets. On Tox21 (global), MAD decreases from 0.073 (logistic) to 0.053 under KMM, with similar improvements in BBB-Martins (0.319) and HIV (0.087), indicating more stable moment alignment under covariate shift. The selective extension (SKMM) further improves calibration, achieving the lowest MAD on all five benchmarks under global calibration—reducing MAD to 0.015 in Tox21, 0.009 in hERG, and 0.013 in HIV. Under Mondrian calibration, SKMM remains competitive and attains the best mean rank (1.2), showing that its gains extend to class-conditional settings.

Furthermore, figure~\ref{fig:calibration-coverage-mondrian} corroborates these observations: SKMM consistently tracks the diagonal target line across coverage levels, while uniform and KDE baselines exhibit systematic undercoverage under shift. KMM narrows this gap, and SKMM further stabilizes calibration across both toxicity and activity prediction tasks. We observe similar trends global calibration, depicted in Appendix ~\ref{app:exp}.

Overall, the results demonstrate that moment-matching-based reweighting improves calibration stability under covariate shift, and that selective filtering further enhances robustness in low-overlap regimes. On average across datasets under global calibration, SKMM achieves a 55\% reduction in MAD relative to CoDrugKDE, highlighting substantial improvements in coverage stability.

\subsection{Bias--Variance and Selective Filtering}
\label{sec:bias-variance}
\begin{figure}[ht]
  \centering
  \includegraphics[width=0.7\linewidth]{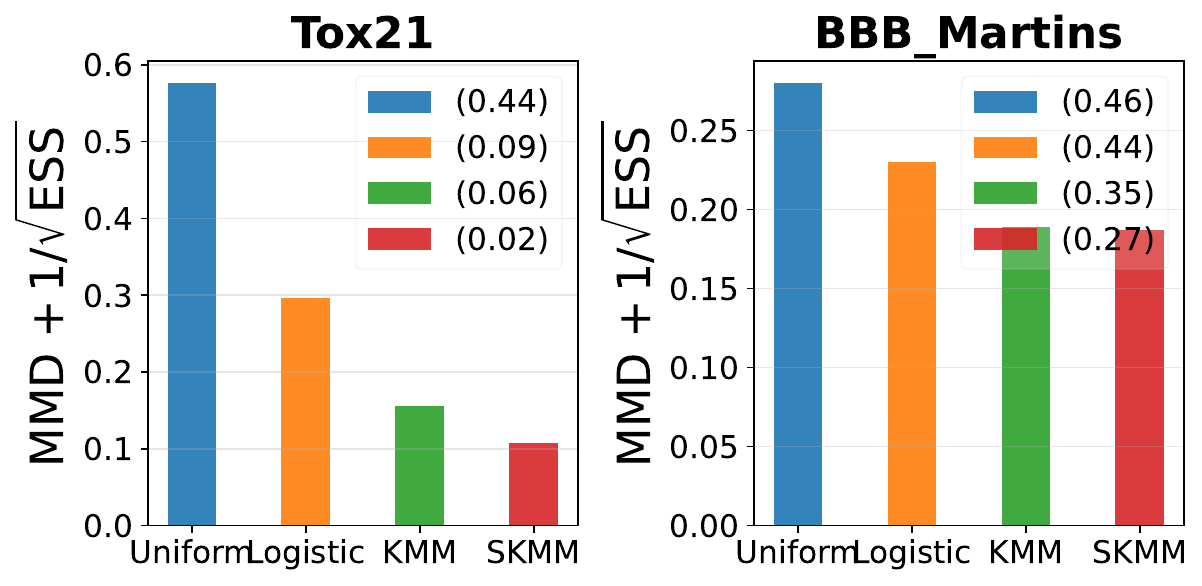}
  \caption{Bias--variance proxy ($\mathrm{MMD} + \sqrt{1/\mathrm{ESS}}$) under covariate shift. MAD in brackets.}
  \label{fig:full-shift-mmd-ess}
\end{figure}
\vspace{-1em}
\begin{figure}[ht]
  \centering
  \includegraphics[width=0.7\linewidth]{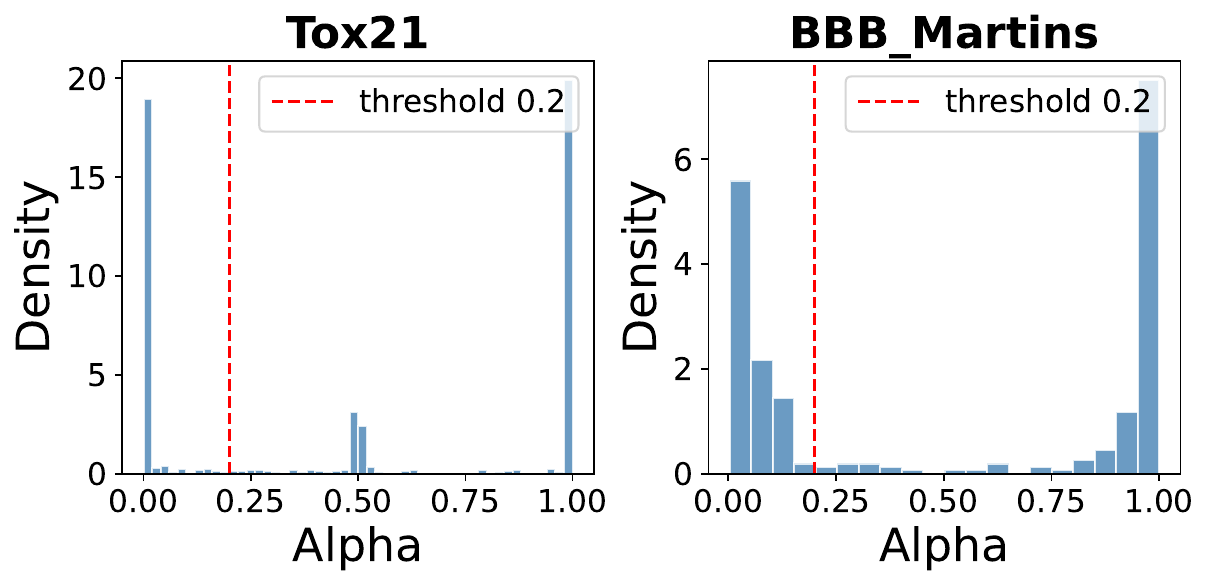}
  \caption{Selection weights under SKMM.}
  \label{fig:alphas}
\end{figure}

To understand the source of improvement under covariate shift, we examine the dominant terms in our coverage bound, namely $
\mathrm{MMD} + \sqrt{1/\mathrm{ESS}} $, which capture bias (distributional mismatch) and variance (weight concentration), respectively. Figure~\ref{fig:full-shift-mmd-ess} reports this bias--variance proxy across methods. Uniform conformal prediction exhibits large combined error due to both high MMD and lack of shift correction. The ordering of $\mathrm{MMD} + \sqrt{1/\mathrm{ESS}}$ across methods closely aligns with empirical calibration performance, with KMM and SKMM achieving the smallest values

Selective KMM (SKMM) further reduces this quantity by restricting correction to regions of shared support, achieving the smallest combined error across datasets. To better understand this behavior, Figure~\ref{fig:alphas} visualizes the learned selection weights $\alpha$. The distribution of $\alpha$ concentrates near the boundaries, with approximately half of the test instances retained (as controlled by $\tau = 0.5$). The learned sparsity is consistent across datasets and remains stable under shift.
and exhibit the same qualitative trends.
\section{Conclusion}

We presented KMM-CP, a framework for conformal prediction under covariate shift that leverages Kernel Mean Matching with a selective extension for limited-support regimes. By combining bounded moment matching with target-aware filtering, the proposed method improves coverage stability relative to standard density-ratio approaches, particularly in high-dimensional molecular representation spaces. Our analysis clarifies how moment alignment and effective sample size govern coverage under shift, and experiments across multiple molecular property benchmarks demonstrate more reliable coverage under structural distribution shift.

We acknowledge several limitations. The selective filtering step reduces the number of evaluated samples, and performance depends on the quality of learned representations and kernel hyperparameters. Although our synthetic shift construction reflects realistic structural mismatch, real-world deployment may involve more complex distribution shifts. Finally, while we demonstrate effectiveness on small-molecule property prediction tasks, generalization to other domains and data modalities remains to be validated.





\bibliography{uai2026-template}

\newpage

\onecolumn

\title{Appendix}
\appendix

\appendix

\section{Proofs of Theoretical Results}

\subsection{Proof of Theorem~\ref{thm:cdf-stability}}

Fix a probability measure $Q$ on $\mathbb{R}^d$ and define
\begin{align*}
F_Q(t) := \mathbb{E}_{X\sim Q}[g_t(X)].
\end{align*}
Then, for any fixed $t\in\mathbb{R}$,
\begin{align*}
F_Q(t)-F_T(t)
=
\mathbb{E}_{X\sim Q}[g_t(X)]
-
\mathbb{E}_{X\sim P_X^T}[g_t(X)].
\end{align*}

Add and subtract $\tilde g_t$ to obtain
\begin{align*}
\bigl|F_Q(t)-F_T(t)\bigr|
&\le
\left|
\mathbb{E}_{X\sim Q}[\tilde g_t(X)]
-
\mathbb{E}_{X\sim P_X^T}[\tilde g_t(X)]
\right| \\
&\quad +
\left|
\mathbb{E}_{X\sim Q}[g_t(X)-\tilde g_t(X)]
\right|
+
\left|
\mathbb{E}_{X\sim P_X^T}[g_t(X)-\tilde g_t(X)]
\right|.
\end{align*}

Since $|g_t(x)-\tilde g_t(x)|\le \|g_t-\tilde g_t\|_\infty$ for all $x$, and both $Q$ and $P_X^T$ are probability measures, we have
\begin{align*}
\left|
\mathbb{E}_{X\sim Q}[g_t(X)-\tilde g_t(X)]
\right|
\le
\|g_t-\tilde g_t\|_\infty,
\qquad
\left|
\mathbb{E}_{X\sim P_X^T}[g_t(X)-\tilde g_t(X)]
\right|
\le
\|g_t-\tilde g_t\|_\infty.
\end{align*}

For the first term, if $\|\tilde g_t\|_{\mathcal{H}}=0$, then $\tilde g_t=0$ in $\mathcal{H}$ and hence
\begin{align*}
\left|
\mathbb{E}_{X\sim Q}[\tilde g_t(X)]
-
\mathbb{E}_{X\sim P_X^T}[\tilde g_t(X)]
\right| = 0.
\end{align*}
Otherwise, define
\begin{align*}
h_t := \frac{\tilde g_t}{\|\tilde g_t\|_{\mathcal{H}}},
\end{align*}
so that $\|h_t\|_{\mathcal{H}}=1$. By the definition of $\mathrm{MMD}_{\mathcal{H}}$,
\begin{align*}
\left|
\mathbb{E}_{X\sim Q}[\tilde g_t(X)]
-
\mathbb{E}_{X\sim P_X^T}[\tilde g_t(X)]
\right|
&=
\|\tilde g_t\|_{\mathcal{H}}
\left|
\mathbb{E}_{X\sim Q}[h_t(X)]
-
\mathbb{E}_{X\sim P_X^T}[h_t(X)]
\right| \\
&\le
\|\tilde g_t\|_{\mathcal{H}}\,\mathrm{MMD}_{\mathcal{H}}(Q,P_X^T).
\end{align*}

Combining the bounds gives
\begin{align*}
|F_Q(t)-F_T(t)|
\le
\|\tilde g_t\|_{\mathcal{H}}\,\mathrm{MMD}_{\mathcal{H}}(Q,P_X^T)
+
2\|g_t-\tilde g_t\|_\infty.
\end{align*}
Taking the supremum over $t\in\mathbb{R}$ and using the uniform bounds
\begin{align*}
\sup_{t\in\mathbb{R}}\|\tilde g_t\|_{\mathcal{H}} \le B_{\mathcal{H}},
\qquad
\sup_{t\in\mathbb{R}}\|g_t-\tilde g_t\|_\infty \le \varepsilon_\infty,
\end{align*}
we obtain
\begin{align*}
\sup_{t\in\mathbb{R}} |F_Q(t)-F_T(t)|
\le
B_{\mathcal{H}}\,\mathrm{MMD}_{\mathcal{H}}(Q,P_X^T)
+
2\varepsilon_\infty.
\end{align*}

Finally, applying this bound with $Q=\widehat P^{\,w}_{S,X}$ and recalling that
\begin{align*}
F_w(t)=\mathbb{E}_{X\sim \widehat P^{\,w}_{S,X}}[g_t(X)],
\end{align*}
yields
\begin{align*}
\sup_{t\in\mathbb{R}} |F_w(t)-F_T(t)|
\le
B_{\mathcal{H}}\,\mathrm{MMD}_{\mathcal{H}}(\widehat P^{\,w}_{S,X},P_X^T)
+
2\varepsilon_\infty.
\end{align*}
This proves the theorem.













\subsection{Proof of Theorem~\ref{thm:coverage-decomposition}}

\begin{proof}
    Throughout the proof, condition on $X_1,\dots,X_n,Z_1,\dots,Z_m$ so the weights $\tilde w_1,\dots,\tilde w_n$ are deterministic and the labels $Y_1,\dots,Y_n$ remain independent.
    \begin{enumerate}
        \item \textbf{Reduce coverage error to a CDF error and an overshoot term.}
        By definition of $\widehat C$,
        \begin{align*}
        \mathbb{P}_{(X,Y)\sim P_T}\bigl(Y\in \widehat C(X)\bigr)
        =
        \mathbb{P}_{(X,Y)\sim P_T}\bigl(s(X,Y)\le \widehat q_{1-\alpha}\bigr)
        =
        F_T(\widehat q_{1-\alpha}).
        \end{align*}
        Since $\widehat F_w(\widehat q_{1-\alpha})\ge 1-\alpha$ by definition of $\widehat q_{1-\alpha}$,
        \begin{align*}
        F_T(\widehat q_{1-\alpha})-(1-\alpha)
        &=
        \bigl(F_T(\widehat q_{1-\alpha})-\widehat F_w(\widehat q_{1-\alpha})\bigr)
        +
        \bigl(\widehat F_w(\widehat q_{1-\alpha})-(1-\alpha)\bigr),
        \end{align*}
        and therefore
        \begin{align*}
        \Bigl|F_T(\widehat q_{1-\alpha})-(1-\alpha)\Bigr|
        \le
        \sup_{t\in\mathbb{R}} |F_T(t)-\widehat F_w(t)|
        +
        \bigl(\widehat F_w(\widehat q_{1-\alpha})-(1-\alpha)\bigr).
        \end{align*}
        Because $\widehat F_w$ is right-continuous and has jumps of size exactly $\tilde w_i$ at the distinct points
        $s(X_i,Y_i)$, the distinct-scores assumption implies that every jump has size at most $\max_i \tilde w_i$.
        Moreover, by minimality of $\widehat q_{1-\alpha}$ we have $\widehat F_w(t)<1-\alpha$ for all $t<\widehat q_{1-\alpha}$,
        hence the overshoot satisfies
        \begin{align*}
        0 \le \widehat F_w(\widehat q_{1-\alpha})-(1-\alpha) \le \max_{1\le i\le n}\tilde w_i.
        \end{align*}
        Therefore
        \begin{align*}
        \Bigl|
        \mathbb{P}_{(X,Y)\sim P_T}\bigl(Y\in \widehat C(X)\bigr) - (1-\alpha)
        \Bigr|
        \le
        \sup_{t\in\mathbb{R}} |F_T(t)-\widehat F_w(t)|
        +
        \max_{1\le i\le n}\tilde w_i.
        \end{align*}
        \item \textbf{Decompose $\sup_t|F_T(t)-\widehat F_w(t)|$ into bias and variance.}
        By the triangle inequality,
        \begin{align*}
        \sup_{t\in\mathbb{R}} |F_T(t)-\widehat F_w(t)|
        \le
        \sup_{t\in\mathbb{R}} |F_T(t)-F_w(t)|
        +
        \sup_{t\in\mathbb{R}} |F_w(t)-\widehat F_w(t)|.
        \end{align*}
        The first term is controlled by Theorem~\ref{thm:cdf-stability}:
        \begin{align*}
        \sup_{t\in\mathbb{R}} |F_T(t)-F_w(t)|
        \le
        B_{\mathcal{H}}\,\mathrm{MMD}_{\mathcal{H}}(\widehat P^{\,w}_{S,X},P_X^T)
        +
        2\,\varepsilon_{\infty}.
        \end{align*}
        It remains to control $\sup_t |F_w(t)-\widehat F_w(t)|$.
        \item \textbf{An expectation bound for the weighted empirical process.}
        Define
        \begin{align*}
        \Delta := \sup_{t\in\mathbb{R}} \left|\widehat F_w(t)-F_w(t)\right|
        =
        \sup_{t\in\mathbb{R}}
        \left|
        \sum_{i=1}^n \tilde w_i
        \Bigl(\mathbf{1}\{s(X_i,Y_i)\le t\}-g_t(X_i)\Bigr)
        \right|.
        \end{align*}
        Let $Y_1',\dots,Y_n'$ be an independent copy of $Y_1,\dots,Y_n$ conditional on $X_1,\dots,X_n$ (and thus also on the
        weights). Define
        \begin{align*}
        \widehat F_w'(t) := \sum_{i=1}^n \tilde w_i\,\mathbf{1}\{s(X_i,Y_i')\le t\}.
        \end{align*}
        By Jensen's inequality and $F_w(t)=\mathbb{E}[\widehat F_w(t)\mid X_1,\dots,X_n,Z_1,\dots,Z_m]$,
        \begin{align*}
        \mathbb{E}\bigl[\Delta \,\big|\, X_1,\dots,X_n,Z_1,\dots,Z_m\bigr]
        &\le
        \mathbb{E}\left[
        \sup_t \left|\widehat F_w(t)-\widehat F_w'(t)\right|
        \,\Big|\, X_1,\dots,X_n,Z_1,\dots,Z_m
        \right].
        \end{align*}
        Introduce i.i.d. Rademacher signs $\varepsilon_1,\dots,\varepsilon_n$ independent of everything else. By standard
        symmetrization,
        \begin{align*}
        \mathbb{E}\bigl[\Delta \,\big|\, X_1,\dots,X_n,Z_1,\dots,Z_m\bigr]
        \le
        2\,\mathbb{E}\left[
        \sup_{t\in\mathbb{R}}
        \left|
        \sum_{i=1}^n \tilde w_i\,\varepsilon_i\,\mathbf{1}\{s(X_i,Y_i)\le t\}
        \right|
        \,\Big|\, X_1,\dots,X_n,Z_1,\dots,Z_m
        \right].
        \end{align*}
        Condition further on $X_1,\dots,X_n,Z_1,\dots,Z_m,Y_1,\dots,Y_n$ and write $s_i:=s(X_i,Y_i)$.
        Let $\pi$ be a permutation such that $s_{\pi(1)}\le \cdots \le s_{\pi(n)}$. Then
        \begin{align*}
        \sup_{t\in\mathbb{R}}
        \left|
        \sum_{i=1}^n \tilde w_i\,\varepsilon_i\,\mathbf{1}\{s_i\le t\}
        \right|
        =
        \max_{0\le k\le n}
        \left|
        \sum_{j=1}^k \tilde w_{\pi(j)}\,\varepsilon_{\pi(j)}
        \right|.
        \end{align*}
        Define $M_k:=\sum_{j=1}^k \tilde w_{\pi(j)}\,\varepsilon_{\pi(j)}$ with $M_0:=0$. By Doob's $L^2$ maximal inequality,
        \begin{align*}
        \mathbb{E}\left[\max_{0\le k\le n} M_k^2 \,\Big|\, X_1,\dots,X_n,Z_1,\dots,Z_m,Y\right]
        \le
        4\,\mathbb{E}\left[M_n^2 \,\Big|\, X_1,\dots,X_n,Z_1,\dots,Z_m,Y\right]
        =
        4\sum_{i=1}^n \tilde w_i^2.
        \end{align*}
        Taking square roots and using Jensen's inequality yields
        \begin{align*}
        \mathbb{E}\left[
        \sup_{t\in\mathbb{R}}
        \left|
        \sum_{i=1}^n \tilde w_i\,\varepsilon_i\,\mathbf{1}\{s_i\le t\}
        \right|
        \,\Big|\, X_1,\dots,X_n,Z_1,\dots,Z_m,Y
        \right]
        \le
        2\sqrt{\sum_{i=1}^n \tilde w_i^2}.
        \end{align*}
        Combining the last two displays and unconditioning on $Y$ gives
        \begin{align*}
        \mathbb{E}\bigl[\Delta \,\big|\, X_1,\dots,X_n,Z_1,\dots,Z_m\bigr]
        \le
        4\sqrt{\sum_{i=1}^n \tilde w_i^2}
        =
        4\sqrt{\tfrac{1}{\mathrm{ESS}}}.
        \end{align*}

        \item \textbf{A high-probability bound via bounded differences.}
        View $\Delta$ as a function of the independent variables $Y_1,\dots,Y_n$ conditional on $X_1,\dots,X_n,Z_1,\dots,Z_m$.
        If one replaces $Y_i$ by another value $Y_i^{\sharp}$ (leaving all other $Y_j$ fixed), then for every $t$ the quantity
        $\widehat F_w(t)$ changes by at most $\tilde w_i$, and $F_w(t)$ is unchanged. Hence $\Delta$ changes by at most $\tilde w_i$.
        McDiarmid's inequality implies that for any $u>0$,
        \begin{align*}
        \mathbb{P}\bigl(\Delta-\mathbb{E}[\Delta\mid X,Z]\ge u \,\big|\, X,Z\bigr)
        \le
        \exp\left(-\frac{2u^2}{\sum_{i=1}^n \tilde w_i^2}\right)
        =
        \exp\bigl(-2u^2\,\mathrm{ESS}\bigr),
        \end{align*}
        where $X$ denotes $(X_1,\dots,X_n)$ and $Z$ denotes $(Z_1,\dots,Z_m)$.
        Taking $u:=\sqrt{\tfrac{1}{2\mathrm{ESS}}\log\tfrac{1}{\delta}}$ yields that with probability at least $1-\delta$,
        \begin{align*}
        \Delta
        \le
        4\sqrt{\tfrac{1}{\mathrm{ESS}}}
        +
        \sqrt{\tfrac{1}{2\mathrm{ESS}}\log\tfrac{1}{\delta}}.
        \end{align*}

        On the event from item 4, we combine items 1 and 2 to obtain
        \begin{align*}
        \Bigl|
        \mathbb{P}_{(X,Y)\sim P_T}\bigl(Y\in \widehat C(X)\bigr) - (1-\alpha)
        \Bigr|
        &\le
        \sup_t |F_T(t)-F_w(t)|
        +
        \sup_t |F_w(t)-\widehat F_w(t)|
        +
        \max_i \tilde w_i \\
        &\le
        B_{\mathcal{H}}\,\mathrm{MMD}_{\mathcal{H}}(\widehat P^{\,w}_{S,X},P_X^T)
        +
        2\,\varepsilon_{\infty}
        +
        \Delta
        +
        \max_i \tilde w_i.
        \end{align*}
        Finally, $\max_i \tilde w_i \le \sqrt{\sum_{i=1}^n \tilde w_i^2}=\sqrt{1/\mathrm{ESS}}$, so item 4 gives
        \begin{align*}
        \Delta+\max_i\tilde w_i
        \le
        5\sqrt{\tfrac{1}{\mathrm{ESS}}}
        +
        \sqrt{\tfrac{1}{2\mathrm{ESS}}\log\tfrac{1}{\delta}}.
        \end{align*}
        Substituting this completes the proof.
    \end{enumerate}
\end{proof}

\subsection{Proof of Corollary~\ref{cor:asymptotic-coverage}}
\begin{proof}
    Fix $\eta>0$ and $\delta\in(0,1)$. By Theorem~\ref{thm:coverage-decomposition}, conditional on
    $X_1,\dots,X_n,Z_1,\dots,Z_m$, with probability at least $1-\delta$ over $Y_1,\dots,Y_n$,
    \begin{align*}
    \Bigl|
    \mathbb{P}_{(X,Y)\sim P_T}\bigl(Y\in \widehat C(X)\bigr) - (1-\alpha)
    \Bigr|
    &\le
    B_{\mathcal{H}}\,\mathrm{MMD}_{\mathcal{H}}\bigl(\widehat P^{\,w}_{S,X},P_X^T\bigr)
    +
    2\,\varepsilon_{\infty,n,m}
    +
    \left(5+\sqrt{\tfrac{1}{2}\log\tfrac{1}{\delta}}\right)\sqrt{\tfrac{1}{\mathrm{ESS}}}.
    \end{align*}
    Define the random bound
    \begin{align*}
    R_{n,m}(\delta)
    :=
    B_{\mathcal{H}}\,\mathrm{MMD}_{\mathcal{H}}\bigl(\widehat P^{\,w}_{S,X},P_X^T\bigr)
    +
    2\,\varepsilon_{\infty,n,m}
    +
    \left(5+\sqrt{\tfrac{1}{2}\log\tfrac{1}{\delta}}\right)\sqrt{\tfrac{1}{\mathrm{ESS}}}.
    \end{align*}
    Then,
    \begin{align*}
    \mathbb{P}\Bigl(
    \Bigl|
    \mathbb{P}_{(X,Y)\sim P_T}\bigl(Y\in \widehat C(X)\bigr) &- (1-\alpha)
    \Bigr| > \eta
    \Bigr)\\
    &=
    \mathbb{E}\Bigl[
    \mathbb{P}\Bigl(
    \Bigl|
    \mathbb{P}_{(X,Y)\sim P_T}\bigl(Y\in \widehat C(X)\bigr) - (1-\alpha)
    \Bigr| > \eta
    \,\Big|\, X_1,\dots,X_n,Z_1,\dots,Z_m
    \Bigr)
    \Bigr] \\
    &\le
    \mathbb{E}\Bigl[
    \mathbf{1}\{R_{n,m}(\delta)>\eta\}
    +
    \mathbb{P}\bigl(\text{the bound above fails}\,\big|\,X_1,\dots,X_n,Z_1,\dots,Z_m\bigr)
    \Bigr] \\
    &\le
    \mathbb{P}\bigl(R_{n,m}(\delta)>\eta\bigr)+\delta,
    \end{align*}

    where the probability is over $X_1,\dots,X_n,Z_1,\dots,Z_m,Y_1,\dots,Y_n$.
    By assumption, $R_{n,m}(\delta)\to 0$ in probability for each fixed $\delta$, hence
    $\mathbb{P}(R_{n,m}(\delta)>\eta)\to 0$. Taking $\limsup$ and then letting $\delta\downarrow 0$ yields
    \begin{align*}
    \lim_{n,m\to\infty}
    \mathbb{P}\Bigl(
    \Bigl|
    \mathbb{P}_{(X,Y)\sim P_T}\bigl(Y\in \widehat C(X)\bigr) - (1-\alpha)
    \Bigr| > \eta
    \Bigr)
    =0.
    \end{align*}
\end{proof}




\section{Justification for Selective KMM}
\label{app:skmm-theory}

\begin{theorem}[Conditional boundary rule under a yield constraint]
\label{lem:skmm-boundary-rule}
Fix $w$ and consider the optimization over $\alpha\in\mathbb{R}^m$
\begin{align*}
\min_{\alpha}\quad & J(\alpha) \\
\text{s.t.}\quad & 0 \le \alpha_j \le 1 \quad \text{for all } j\in[m], \\
& \frac{1}{m}\sum_{j=1}^m \alpha_j \ge \tau,
\end{align*}
where $J$ is convex and differentiable. Assume $\tau<1$, so Slater's condition holds.
Let $\alpha^\star$ be an optimal solution, and let $(\lambda^\star,\mu^\star,\nu^\star)$ be KKT multipliers, where
$\lambda_j^\star\ge 0$ and $\mu_j^\star\ge 0$ correspond to the constraints $-\alpha_j\le 0$ and $\alpha_j-1\le 0$,
and $\nu^\star\ge 0$ corresponds to the yield constraint $\tau-\frac{1}{m}\sum_{j=1}^m \alpha_j \le 0$.
Then for each $j\in[m]$ the following statements hold:
\begin{enumerate}
\item If $\frac{\partial J}{\partial \alpha_j}(\alpha^\star)-\frac{\nu^\star}{m} > 0$, then $\alpha_j^\star=0$.
\item If $\frac{\partial J}{\partial \alpha_j}(\alpha^\star)-\frac{\nu^\star}{m} < 0$, then $\alpha_j^\star=1$.
\item If $\frac{\partial J}{\partial \alpha_j}(\alpha^\star)-\frac{\nu^\star}{m} = 0$, then $\alpha_j^\star$ may be
interior or boundary; in particular, any interior coordinate $\alpha_j^\star\in(0,1)$ must satisfy the equality.
\end{enumerate}
\end{theorem}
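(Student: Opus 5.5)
The plan is to write down the KKT system at $\alpha^\star$ and read off each case from stationarity combined with complementary slackness. The constraints are affine and the objective is convex and differentiable. Since $\tau<1$, the point $\alpha_j\equiv(1+\tau)/2$ is strictly feasible for every constraint, so Slater's condition holds. Hence KKT multipliers exist at any optimum, and the statement already supplies them as $(\lambda^\star,\mu^\star,\nu^\star)$.

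First, write the Lagrangian
\[
L(\alpha,\lambda,\mu,\nu)=J(\alpha)-\sum_j\lambda_j\alpha_j+\sum_j\mu_j(\alpha_j-1)+\nu\Bigl(\tau-\tfrac1m\sum_j\alpha_j\Bigr).
\]
Stationarity in coordinate $j$ then gives
\[
\frac{\partial J}{\partial\alpha_j}(\alpha^\star)-\frac{\nu^\star}{m}=\lambda_j^\star-\mu_j^\star .
\]
Complementary slackness gives $\lambda_j^\star\alpha_j^\star=0$ and $\mu_j^\star(\alpha_j^\star-1)=0$. Denote the left-hand side of the stationarity equation by $r_j$.

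Now treat the three cases.
\begin{enumerate}
\item If $r_j>0$, then $\lambda_j^\star>\mu_j^\star\ge0$, so $\lambda_j^\star>0$, and complementary slackness forces $\alpha_j^\star=0$.
\item If $r_j<0$, then symmetrically $\mu_j^\star>0$, which forces $\alpha_j^\star=1$.
\item If $\alpha_j^\star\in(0,1)$, both box constraints are inactive, so $\lambda_j^\star=\mu_j^\star=0$ and hence $r_j=0$. Conversely, $r_j=0$ is compatible with $\lambda_j^\star=\mu_j^\star$. Both must vanish because $\alpha_j^\star$ cannot equal $0$ and $1$ simultaneously. Then no constraint pins $\alpha_j^\star$, so it may lie at a boundary or in the interior.
\end{enumerate}
The first two cases can also be read as contrapositives of the third.

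I do not expect a real obstacle; the argument is routine KKT bookkeeping. The one point needing care is the sign convention for $\nu^\star$ in the yield constraint, namely that it enters as $-\nu^\star/m$ in the gradient. A second point is that the cases hold for the particular multipliers given, which need not be unique. The statement is phrased in terms of fixed KKT multipliers, so this causes no difficulty.
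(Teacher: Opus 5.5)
Your proposal is correct and follows essentially the same route as the paper: Slater via the point $(1+\tau)/2$, the stationarity identity $\frac{\partial J}{\partial\alpha_j}(\alpha^\star)-\nu^\star/m=\lambda_j^\star-\mu_j^\star$, and complementary slackness on the box multipliers. The only cosmetic difference is the direction of the case analysis. You argue forward from the sign of $r_j$ to a strictly positive multiplier, whereas the paper first sorts the cases $\alpha_j^\star=0$, $\alpha_j^\star=1$, $\alpha_j^\star\in(0,1)$ by the sign of $g_j^\star$ and then takes contrapositives.
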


\begin{proof}
Since $\tau<1$, there exists a strictly feasible point, for example $\alpha^0_j = (\tau+1)/2$ for all $j$, which satisfies
$0<\alpha^0_j<1$ and $\frac{1}{m}\sum_j \alpha^0_j = (\tau+1)/2 > \tau$. Hence Slater's condition holds, and KKT
conditions are necessary and sufficient for optimality.

The KKT conditions assert the existence of multipliers $(\lambda^\star,\mu^\star,\nu^\star)$ satisfying:
\begin{enumerate}
\item (Primal feasibility) $0\le \alpha_j^\star \le 1$ for all $j$ and $\frac{1}{m}\sum_{j=1}^m \alpha_j^\star \ge \tau$.
\item (Dual feasibility) $\lambda_j^\star\ge 0$, $\mu_j^\star\ge 0$ for all $j$, and $\nu^\star\ge 0$.
\item (Complementary slackness)
\begin{align*}
\lambda_j^\star \alpha_j^\star = 0,\qquad
\mu_j^\star(\alpha_j^\star-1)=0,\qquad
\nu^\star\left(\tau-\frac{1}{m}\sum_{j=1}^m \alpha_j^\star\right)=0.
\end{align*}
\item (Stationarity) For each $j\in[m]$,
\begin{align*}
\frac{\partial J}{\partial \alpha_j}(\alpha^\star) - \frac{\nu^\star}{m} - \lambda_j^\star + \mu_j^\star = 0.
\end{align*}
\end{enumerate}

Fix an index $j\in[m]$ and define the shifted gradient
\begin{align*}
g_j^\star := \frac{\partial J}{\partial \alpha_j}(\alpha^\star) - \frac{\nu^\star}{m}.
\end{align*}
By stationarity,
\begin{align*}
g_j^\star = \lambda_j^\star - \mu_j^\star.
\end{align*}
We now analyze cases using complementary slackness:

\begin{enumerate}
\item If $\alpha_j^\star=0$, then $\mu_j^\star(\alpha_j^\star-1)=\mu_j^\star(-1)=0$ implies $\mu_j^\star=0$, hence
$g_j^\star=\lambda_j^\star\ge 0$.

\item If $\alpha_j^\star=1$, then $\lambda_j^\star \alpha_j^\star=\lambda_j^\star=0$, hence
$g_j^\star = -\mu_j^\star \le 0$.

\item If $\alpha_j^\star\in(0,1)$, then $\lambda_j^\star \alpha_j^\star=0$ and $\alpha_j^\star>0$ imply $\lambda_j^\star=0$,
and $\mu_j^\star(\alpha_j^\star-1)=0$ and $\alpha_j^\star-1\neq 0$ imply $\mu_j^\star=0$. Therefore $g_j^\star=0$.
\end{enumerate}

These implications yield the desired conditional boundary rule:

\begin{enumerate}
\item If $g_j^\star>0$, then $\alpha_j^\star$ cannot equal $1$ (which would force $g_j^\star\le 0$) and cannot be interior
(which would force $g_j^\star=0$). Hence $\alpha_j^\star=0$.

\item If $g_j^\star<0$, then $\alpha_j^\star$ cannot equal $0$ (which would force $g_j^\star\ge 0$) and cannot be interior
(which would force $g_j^\star=0$). Hence $\alpha_j^\star=1$.

\item If $g_j^\star=0$, then the KKT conditions allow $\alpha_j^\star$ to be interior (in which case necessarily
$\lambda_j^\star=\mu_j^\star=0$) or boundary (for example $\alpha_j^\star=0$ with $\lambda_j^\star=0$ or
$\alpha_j^\star=1$ with $\mu_j^\star=0$). This proves the final claim.
\end{enumerate}
\end{proof}

\begin{corollary}[Thresholding structure for SKMM target selection]
\label{cor:skmm-thresholding}
Fix $w$ and consider the SKMM subproblem in $\alpha\in\mathbb{R}^m$ with yield constraint
\begin{align*}
\min_{\alpha}\quad & J(\alpha)
:=\left\|
\frac{1}{n}\sum_{i=1}^n w_i \varphi(x_i)
- \frac{1}{m}\sum_{j=1}^m \alpha_j \varphi(z_j)
\right\|_{\mathcal{H}}^2 \\
\text{s.t.}\quad & 0 \le \alpha_j \le 1 \quad \text{for all } j\in[m], \\
& \frac{1}{m}\sum_{j=1}^m \alpha_j \ge \tau,
\end{align*}
where $\tau<1$. Let $\alpha^\star$ be an optimal solution, and let $\nu^\star\ge 0$ be a KKT multiplier for the yield
constraint. Define the shifted partial derivatives
\begin{align*}
g_j^\star
:=
\frac{\partial J}{\partial \alpha_j}(\alpha^\star)-\frac{\nu^\star}{m},
\qquad j\in[m].
\end{align*}
Then:
\begin{enumerate}
\item If $g_j^\star>0$, then $\alpha_j^\star=0$.
\item If $g_j^\star<0$, then $\alpha_j^\star=1$.
\item If $\alpha_j^\star\in(0,1)$, then $g_j^\star=0$.
\end{enumerate}
Equivalently, there exists a scalar threshold $t^\star:=\nu^\star/m$ such that for every $j$,
\begin{align*}
\alpha_j^\star=0 \ \Rightarrow\ \frac{\partial J}{\partial \alpha_j}(\alpha^\star)\ge t^\star,
\qquad
\alpha_j^\star=1 \ \Rightarrow\ \frac{\partial J}{\partial \alpha_j}(\alpha^\star)\le t^\star,
\qquad
0<\alpha_j^\star<1 \ \Rightarrow\ \frac{\partial J}{\partial \alpha_j}(\alpha^\star)= t^\star.
\end{align*}
\end{corollary}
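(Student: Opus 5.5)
The plan is to deduce Corollary~\ref{cor:skmm-thresholding} directly from Theorem~\ref{lem:skmm-boundary-rule}. All that needs checking is that the SKMM subproblem with $w$ fixed satisfies that theorem's hypotheses: $J$ convex and differentiable, $\tau<1$, and the same box-plus-yield constraint structure.

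\textbf{Step 1: convexity and differentiability of $J$.} First rewrite $J$ in terms of Gram matrices. Set $\mu_w:=\frac1n\sum_i w_i\varphi(x_i)$, which is a fixed element of $\mathcal H$ since $w$ is held fixed. Then
\begin{align*}
J(\alpha)=\|\mu_w\|_{\mathcal H}^2-\frac{2}{nm}\sum_{i,j} w_i\alpha_j k(x_i,z_j)+\frac{1}{m^2}\alpha^\top K_{zz}\alpha,
\end{align*}
where $(K_{zz})_{jl}=k(z_j,z_l)$. Because $k$ is a positive semidefinite kernel, $K_{zz}\succeq 0$. Hence $J$ is a convex quadratic in $\alpha$, and in particular it is differentiable. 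Its gradient is
\begin{align*}
\frac{\partial J}{\partial\alpha_j}(\alpha)=\frac{2}{m}\Bigl(\frac1m\sum_l \alpha_l k(z_j,z_l)-\frac1n\sum_i w_i k(x_i,z_j)\Bigr).
\end{align*}
This explicit form is not needed for the proof. It is worth recording because it makes the threshold interpretable: points $z_j$ whose kernel similarity to the weighted source mass is low have large partial derivative and are pushed to $\alpha_j=0$.

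\textbf{Step 2: constraints, Slater, and existence of an optimum.} The constraints match those of Theorem~\ref{lem:skmm-boundary-rule} exactly. With $\tau<1$, Slater's condition holds, as shown in that proof via $\alpha^0_j=(\tau+1)/2$. The feasible set is compact and nonempty, so a minimizer $\alpha^\star$ exists. KKT multipliers $(\lambda^\star,\mu^\star,\nu^\star)$ therefore exist, including the $\nu^\star\ge0$ named in the statement.

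\textbf{Step 3: apply the theorem.} Items 1 and 2 of the corollary are items 1 and 2 of Theorem~\ref{lem:skmm-boundary-rule} verbatim. Item 3 is the final clause of that theorem's item 3: an interior coordinate forces $g_j^\star=0$.

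\textbf{Step 4: threshold reformulation.} Set $t^\star=\nu^\star/m$ and take contrapositives. If $\alpha_j^\star=0$, then $g_j^\star<0$ is impossible by item 2, so $\partial_jJ\ge t^\star$. If $\alpha_j^\star=1$, then $g_j^\star>0$ is impossible by item 1, so $\partial_jJ\le t^\star$. The interior case is item 3. Equivalently, one can read these sign conditions off the case analysis in the theorem's proof: $g_j^\star=\lambda_j^\star\ge0$ at $0$, $g_j^\star=-\mu_j^\star\le0$ at $1$, and $g_j^\star=0$ in the interior.

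\textbf{Main obstacle.} The only nontrivial point is the convexity verification in Step 1, which rests on positive semidefiniteness of the kernel Gram matrix. A secondary subtlety is that the statement fixes \emph{a} KKT multiplier $\nu^\star$. The threshold $t^\star$ is then tied to that choice of multiplier, which is fine because the theorem holds for any valid multiplier triple.
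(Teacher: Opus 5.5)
Your proposal is correct and follows essentially the same route as the paper: show $J$ is a convex quadratic in $\alpha$ because the target Gram matrix is positive semidefinite, invoke Slater via $\alpha^0_j=(\tau+1)/2$, and apply Theorem~\ref{lem:skmm-boundary-rule} directly. Your explicit contrapositive derivation of the threshold form and the compactness argument for existence of $\alpha^\star$ are harmless additions that the paper leaves implicit.
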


\begin{proof}
First note that for the SKMM objective, $J(\alpha)$ is a convex quadratic function of $\alpha$ because
\begin{align*}
J(\alpha)
&=
\left\|
\mu_w-\mu_\alpha
\right\|_{\mathcal{H}}^2,
\qquad
\mu_w:=\frac{1}{n}\sum_{i=1}^n w_i \varphi(x_i),
\qquad
\mu_\alpha:=\frac{1}{m}\sum_{j=1}^m \alpha_j \varphi(z_j),
\end{align*}
and expanding yields
\begin{align*}
J(\alpha)
=
\|\mu_w\|_{\mathcal{H}}^2
-
\frac{2}{m}\sum_{j=1}^m \alpha_j \langle \mu_w,\varphi(z_j)\rangle_{\mathcal{H}}
+
\frac{1}{m^2}\sum_{j=1}^m\sum_{l=1}^m \alpha_j\alpha_l\,k(z_j,z_l),
\end{align*}
whose Hessian with respect to $\alpha$ is $(2/m^2)\,K_{TT}$, where $K_{TT}=(k(z_j,z_l))_{j,l}$ is positive
semidefinite. Hence $J$ is convex and differentiable.

Since $\tau<1$, the point $\alpha^0_j=(\tau+1)/2$ is strictly feasible, so Slater's condition holds and KKT conditions
are necessary and sufficient for optimality. Therefore Theorem~\ref{lem:skmm-boundary-rule} applies directly to this
choice of $J$, yielding the three implications stated in the corollary. The equivalent threshold formulation follows
by defining $t^\star:=\nu^\star/m$.
\end{proof}

\section{Instability of Classifier-Based Density Ratios}
\label{app:classifier-instability}

A common alternative to KMM for estimating the density ratio $w(x) = P_T(x)/P_S(x)$ is probabilistic classification. By training a classifier $\hat{p}(x)$ to predict the probability that a sample $x$ belongs to the target domain, the estimated weight is given by $\hat{w}(x) = \hat{p}(x) / (1 - \hat{p}(x))$. The following proposition demonstrates why this approach can fail under covariate shift with limited support.

\begin{proposition}[Unbounded MMD under Bounded Classifier Error]
\label{prop:classifier-instability}
Let the true density ratio be $w^*(x) = P_T(x) / P_S(x)$ and let $\hat{w}(x)$ be the density ratio estimated via a classifier $\hat{p}(x)$. For any $\epsilon > 0$ and $M > 0$, there exists a sufficiently small $\delta$ and a sample containing such points such that $\mathrm{MMD}(P_S^{\hat w}, P_T) > M$.
\end{proposition}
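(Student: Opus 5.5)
The plan is to read the proposition as a constructive statement. A classifier whose probability error is uniformly at most $\epsilon$ can still produce an arbitrarily large weight on a source point lying in a low-support region. That single weight, entering the \emph{unnormalized} weighted measure $P_S^{\hat w}=\frac1n\sum_i \hat w(x_i)\delta_{x_i}$ of Section~\ref{sec:kmm}, drives the MMD above any prescribed $M$.

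The precise reading I will prove is the following. The parameter $\epsilon$ bounds the classifier error, $|\hat p(x)-p^*(x)|\le\epsilon$, where $p^*=w^*/(1+w^*)$ is the Bayes domain probability. The parameter $\delta$ measures how small the source density is at some point $x_0$ in the target support, $P_S(x_0)\le\delta$, and the sample is required to contain $x_0$.

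\textbf{Step 1: an $\epsilon$-accurate classifier can have an exploding odds ratio.} At $x_0$ the true ratio is $w^*(x_0)=P_T(x_0)/P_S(x_0)\ge P_T(x_0)/\delta$, so $p^*(x_0)=w^*/(1+w^*)\to 1$ as $\delta\to 0$. Choose $\delta$ small enough that $1-p^*(x_0)<\epsilon$. Then the classifier $\hat p(x)=\min\{1-\eta,\,p^*(x)+\epsilon\}$ satisfies the $\epsilon$-error bound for every small $\eta>0$. At $x_0$ it gives $\hat p(x_0)=1-\eta$, hence
\[
\hat w(x_0)=\frac{1-\eta}{\eta}\ \ge\ \frac{1}{2\eta}\quad\text{for }\eta\le\tfrac12 .
\]
So the weight is unbounded, while the probability error stays bounded by $\epsilon$ uniformly. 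In fact the true weight $w^*(x_0)\ge P_T(x_0)/\delta$ already diverges, so one could also take $\hat p=p^*$ and simply let $\delta\to0$.

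\textbf{Step 2: a large single weight forces a large MMD.} Use the RKHS-norm form of the MMD and the reverse triangle inequality:
\[
\mathrm{MMD}(P_S^{\hat w},P_T)=\Bigl\|\tfrac1n\textstyle\sum_i \hat w(x_i)\varphi(x_i)-\mu_T\Bigr\|_{\mathcal H}\ \ge\ \Bigl\|\tfrac1n\textstyle\sum_i \hat w(x_i)\varphi(x_i)\Bigr\|_{\mathcal H}-\|\mu_T\|_{\mathcal H}.
\]
Assume a bounded kernel, $k\le\kappa$, so that $\|\mu_T\|_{\mathcal H}\le\sqrt\kappa$. Assume also a nonnegative kernel, such as a Gaussian, together with $\hat w\ge 0$. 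Then
\[
\Bigl\|\textstyle\sum_i \hat w_i\varphi(x_i)\Bigr\|^2=\sum_{i,l}\hat w_i\hat w_l\,k(x_i,x_l)\ \ge\ \hat w(x_0)^2\,k(x_0,x_0).
\]
This gives $\mathrm{MMD}\ge \hat w(x_0)\sqrt{k(x_0,x_0)}/n-\sqrt\kappa$. Choosing $\hat w(x_0)>n\,(M+\sqrt\kappa)/\sqrt{k(x_0,x_0)}$ completes the proof. With Step 1 this means taking $\eta$, or equivalently $\delta$, small in terms of $n$, $M$ and $\kappa$.

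\textbf{Main obstacle.} The hard part is not the estimate but fixing the interpretation of the loosely stated hypotheses.

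First, the MMD was defined for probability measures, so I must work with the unnormalized $P_S^{\hat w}$. The argument fails for the normalized measure $\widehat P^{\,w}_{S,X}$, since the MMD between two probability measures is at most $2\sqrt\kappa$. I will state this explicitly and remark that the normalized analogue of the instability is collapse of the ESS: $\tilde w_{x_0}\to1$ forces $\mathrm{ESS}\to1$, which blows up the variance term of Theorem~\ref{thm:coverage-decomposition}.

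Second, the role of $\epsilon$ must be pinned down as a uniform probability error, which is what makes the contrast with the box constraint $w_i\le B$ in KMM meaningful.
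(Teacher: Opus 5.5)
Your proposal is correct and takes the same route as the paper. The paper sets $p^*(x)=1-\delta$ and uses the $\epsilon$-accurate estimate $\hat p(x)=1-\delta^2$, so that $\hat w(x)\approx 1/\delta^2$ explodes, and then asserts that this weight dominates the unnormalized sum $\frac{1}{n}\sum_i \hat w(x_i)\varphi(x_i)$; your Steps 1--2 are this argument with a slightly different classifier, your reverse-triangle bound under a bounded nonnegative kernel makes the paper's ``dominates'' step rigorous, and your caveats are accurate (the claim needs the unnormalized measure, and even $\hat p=p^*$ suffices because $w^*$ itself diverges).
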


\begin{proof}
Assuming equal prior class probabilities for the source and target domains, the true conditional probability of a sample belonging to the target domain is $p^*(x) = P_T(x) / (P_T(x) + P_S(x))$. The true density ratio is exactly $w^*(x) = p^*(x) / (1 - p^*(x))$.

Consider a region in the covariate space where the source distribution has very low mass but the target distribution is well-supported. In this region, $P_S(x) \to 0$, which implies $p^*(x) \to 1$. Let $p^*(x) = 1 - \delta$ for some sufficiently small $\delta > 0$. The true weight is:
$$w^*(x) = \frac{1 - \delta}{\delta} \approx \frac{1}{\delta}.$$

Assume our classifier slightly overestimates the probability of the target class such that $\hat{p}(x) = 1 - \delta^2$. The absolute error of our classifier is:
$$|\hat{p}(x) - p^*(x)| = |(1 - \delta^2) - (1 - \delta)| = \delta - \delta^2 < \delta.$$
By choosing $\delta \le \epsilon$, the classifier satisfies the bounded error condition. However, the estimated weight in this region becomes:
$$\hat{w}(x) = \frac{1 - \delta^2}{\delta^2} \approx \frac{1}{\delta^2}.$$
The deviation of the estimated weight from the true weight is:
$$|\hat{w}(x) - w^*(x)| = \left| \frac{1 - \delta^2}{\delta^2} - \frac{1 - \delta}{\delta} \right| = \frac{1 - \delta}{\delta^2}.$$
As $\delta \to 0$, the difference between the estimated weight and the true weight grows as $\mathcal{O}(1/\delta^2)$.

Recall the empirical MMD objective in the RKHS $\mathcal{H}$ with feature map $\varphi$:
$$\mathrm{MMD}(P_S^{\hat{w}}, P_T) = \left\| \frac{1}{n}\sum_{i=1}^n \hat{w}(x_i) \varphi(x_i) - \frac{1}{m}\sum_{j=1}^m \varphi(z_j) \right\|_{\mathcal{H}}.$$
Because $\hat{w}(x_i)$ grows unboundedly as $\delta \to 0$ for points in this low-density region, the term $\hat{w}(x_i) \varphi(x_i)$ will dominate the empirical source mean. Thus, for any arbitrary bound $M$, there exists a sufficiently small $\delta$ and a sample containing such points such that $\mathrm{MMD}(P_S^{\hat{w}}, P_T) > M$. 

This demonstrates that minimizing a classifier's probability estimation error does not safely bound the MMD or the variance of the weights, leading to degraded conformal prediction coverage. KMM explicitly resolves this by enforcing the bound $w_i \le B$ and directly matching the moments. \qed
\end{proof}

\pagebreak

\section{Mondrian Conformal Prediction}
\label{app:mondrian}

\subsection{Mondrian Conformal Prediction}

Mondrian Conformal Prediction (MCP) extends standard conformal prediction by conditioning calibration on a predefined partition of the data \cite{vovk2012conditional}. Instead of enforcing marginal coverage over the entire population, MCP guarantees coverage within each subgroup defined by a class label or other stratification variable.

Let $g(X) \in \mathcal{G}$ denote a grouping function. In classification tasks, a common choice is $g(X,Y)=Y$, leading to class-conditional calibration. For each group $c \in \mathcal{G}$, define the group-specific conformity scores
\[
\{ S_i : g(X_i,Y_i)=c \}.
\]
Let $n_c$ denote the number of calibration points in group $c$. The group-specific empirical CDF is
\[
F_c(t) = \frac{1}{n_c} \sum_{i: g(X_i,Y_i)=c} \mathbf{1}\{S_i \le t\}.
\]
The Mondrian conformal threshold for group $c$ is then
\[
\hat{q}_{1-\alpha}^{(c)} 
= \inf \left\{ t : F_c(t) \ge 1-\alpha \right\}.
\]

The resulting prediction set for a new input $(X_{N+1},Y_{N+1})$ with group $c = g(X_{N+1})$ satisfies
\[
\mathbb{P}\bigl( Y_{N+1} \in \hat{C}(X_{N+1}) \mid g(X_{N+1}) = c \bigr)
\ge 1-\alpha,
\]
under exchangeability within each group.

In imbalanced molecular datasets, class-conditional calibration prevents dominant classes from overwhelming rare but important categories (e.g., active or toxic compounds).

\subsection{Weighted Mondrian Conformal Prediction}

Under covariate shift, we extend MCP by incorporating importance weights within each group. Let $w_i$ denote the calibration weights (e.g., obtained via KMM or SKMM). Define the normalized group-specific weights
\[
\tilde{w}_i^{(c)} 
= \frac{w_i}{\sum_{j: g(X_j,Y_j)=c} w_j},
\quad \text{for } g(X_i,Y_i)=c.
\]
The weighted group CDF becomes
\[
F_c^{w}(t) 
= \sum_{i: g(X_i,Y_i)=c} 
\tilde{w}_i^{(c)} \mathbf{1}\{S_i \le t\}.
\]
The Mondrian weighted quantile is
\[
\hat{q}_{1-\alpha}^{(c),w}
= \inf \left\{ t : F_c^{w}(t) \ge 1-\alpha \right\}.
\]

The prediction set is constructed exactly as in the global weighted case, except that quantiles are computed separately within each group.

\begin{algorithm}[t]
\caption{Mondrian Split Conformal Prediction with KMM / Selective KMM}
\label{alg:kmm_mondrian}
\begin{algorithmic}[1]
\REQUIRE Labeled dataset $\mathcal{D} = \{(x_i,y_i)\}_{i=1}^N$, 
unlabeled test covariates $\{z_j\}_{j=1}^m$, 
miscoverage level $\alpha_{\mathrm{conf}}$, 
kernel $k(\cdot,\cdot)$, 
weight bound $B$, 
selection fraction $\tau$ (optional)

\ENSURE Prediction sets $\hat C(z_j)$

\STATE Split $\mathcal{D}$ into training set $\mathcal{D}_{\mathrm{train}}$ 
and calibration set $\mathcal{D}_{\mathrm{cal}} = \{(x_i,y_i)\}_{i=1}^n$

\STATE Train predictor $f$ on $\mathcal{D}_{\mathrm{train}}$

\STATE Define conformity score $S(x,y)$ based on $f$

\IF{Selective KMM is enabled}
    \STATE Solve double-weighted KMM to obtain selection weights $\alpha_j$
    \STATE Define selected subset $\mathcal{T}_{\mathrm{sel}}$
\ELSE
    \STATE Set $\mathcal{T}_{\mathrm{sel}} = \{z_j\}_{j=1}^m$
\ENDIF

\STATE Solve standard KMM between $\mathcal{D}_{\mathrm{cal}}$ and $\mathcal{T}_{\mathrm{sel}}$ to obtain source weights $w_i$

\FOR{each class $c$}
    \STATE Let $\mathcal{I}_c = \{ i : y_i = c \}$
    \STATE Normalize weights within class:
    \[
    \tilde w_i^{(c)} = 
    \frac{w_i}{\sum_{k \in \mathcal{I}_c} w_k}
    \quad \text{for } i \in \mathcal{I}_c
    \]
    \STATE Compute class-conditional quantile $\hat q_{1-\alpha}^{(c)}$:
    \[
    \sum_{i \in \mathcal{I}_c} 
    \tilde w_i^{(c)} 
    \mathbf{1}\{S_i \le t\} 
    \ge 1-\alpha_{\mathrm{conf}}
    \]
\ENDFOR

\FOR{each $z_j \in \mathcal{T}_{\mathrm{sel}}$}
    \STATE Let $\hat c = \text{predicted class of } z_j$
    \STATE Output
    \[
    \hat C(z_j) = 
    \{ y : S(z_j,y) \le \hat q_{1-\alpha}^{(\hat c)} \}
    \]
\ENDFOR
\end{algorithmic}
\end{algorithm}

\paragraph{Practical Choice of Groups.}
In our experiments, we use class-conditional Mondrian calibration: $g(X,Y)=Y$. For toxicity datasets, we calibrate within the non-toxic class to avoid undercoverage of rare toxic compounds; for activity datasets, we calibrate within the active class. This choice reflects domain-specific asymmetry in decision importance. In Algorithm~\ref{alg:kmm_mondrian}, we summarize the application of KMM-CP for class conditional guarentees under covariate shift. 

\clearpage
\section{Additional Experimental Details}
\label{app:exp}

\subsection{Code Availability}

All code used to reproduce the experiments is publicly available at:

\begin{center}
\url{https://github.com/siddharthal/KMM-CP}
\end{center}

The repository contains complete training scripts, data preprocessing pipelines, and exact random seeds for all five runs per experiment. All results reported in the paper are fully reproducible.

\subsection{Dataset Statistics}

Table~\ref{tab:dataset_stats} summarizes dataset sizes and splits.

\begin{table}[h]
\centering
\caption{Dataset statistics and data splits.}
\label{tab:dataset_stats}
\small
\begin{tabular}{lcccc}
\toprule
Dataset & Total $N$ & Train & Calib & Test \\
\midrule
AMES         & 7,280  & 5,096 & 1,092 & 1,092 \\
BBB-Martins  & 2,033  & 1,423 & 305   & 305   \\
HIV          & 41,132 & 28,793 & 6,169 & 6,170 \\
Tox21        & 7,267  & 5,087 & 1,090 & 1,090 \\
hERG         & 13,445 & 9,411 & 2,017 & 2,017 \\
\bottomrule
\end{tabular}
\end{table}

\subsection{Training Details and Hyperparameters}

All models use the AttentiveFP \cite{attentivefp} graph neural network architecture implemented via the \texttt{dgllife} library \cite{dgllife}, with training performed in PyTorch.

\paragraph{Optimization.}
\begin{itemize}
    \item Optimizer: Adam
    \item Learning rate: $1 \times 10^{-3}$
    \item Learning rate scheduler: StepLR (step\_size=1, $\gamma=0.8$)
\end{itemize}

\paragraph{Training Loop.}
\begin{itemize}
    \item Maximum epochs: 50
    \item Batch size: 128
    \item Early stopping: patience = 5 (based on training loss)
\end{itemize}

\paragraph{Model Architecture.}
\begin{itemize}
    \item Graph embedding dimension: 512
    \item Hidden dimension (classifier): 64
    \item GNN layers (AttentiveFP steps): 3
    \item Output dimension: 2 (binary classification)
    \item Total parameters: 10M
\end{itemize}

\paragraph{Training time} Model training takes approximately 5–25 minutes per dataset on a single NVIDIA A100 GPU, depending on dataset size.

\paragraph{Loss Function.}
\begin{itemize}
    \item Cross-entropy loss
    \item Class weights enabled (inverse class frequency)
\end{itemize}

\subsection{Conformal Prediction Parameters}

\paragraph{Non-conformity Score.}
Non-conformity scores are defined using the predicted class probabilities from the trained AttentiveFP model.

\paragraph{Feature Representation.}
Penultimate-layer embeddings from the trained model are used as feature representations for density-ratio estimation and KMM.

\subsection{Implementation of KMM}

Kernel Mean Matching (KMM) optimization is solved using \texttt{cvxopt} library.

\paragraph{Weight Constraints.}
\begin{itemize}
    \item Weight upper bound: $B = 30$
    \item Target selection fraction (SKMM): $\tau = 0.5$
    \item Second-stage filtering threshold on $\alpha$: 0.2
\end{itemize}

These hyperparameters are fixed across all datasets and are not tuned per dataset.

\paragraph{Kernel Bandwidth Selection.}

We use an RBF kernel with bandwidth selected via a median-distance heuristic \cite{gretton2012kernel}. Specifically, we compute the median pairwise distance between calibration and test embeddings and evaluate bandwidths from the set:
\[
\sigma \in \{0.01,\, 0.1,\, 0.5,\, 1,\, 2\} \times \mathrm{median\_distance}.
\]
The bandwidth maximizing standardized MMD (z-scored across permutations) is selected.

This strategy balances sensitivity to local discrepancies and stability under high-dimensional embeddings, and is commonly used in kernel two-sample testing.

\paragraph{Computation Time.}

KMM optimization is performed on CPUs and requires approximately:
\begin{itemize}
    \item 6 minutes for the smallest dataset,
    \item up to 50 minutes for the largest dataset.
\end{itemize}

\subsection{Compute Resources}

All experiments were conducted on a cluster with:
\begin{itemize}
    \item 4 NVIDIA A100 GPUs,
    \item 64 CPU cores.
\end{itemize}

\subsection{Implementation of Density-Ratio Baselines}

\paragraph{KDE Baseline.}
For the KDE-based density-ratio baseline, we estimate source and target marginal densities using a Gaussian kernel. The bandwidth is selected via a held-out validation split by maximizing the log-likelihood over candidate values $\{0.01, 0.1, 1.0, 10\}$. The same bandwidth is used for both source and target density estimation. The density ratio is then formed as $\hat{w}(x) = \hat{p}_t(x)/\hat{p}_s(x)$.

\paragraph{Logistic Regression Baseline.}
For the classifier-based density-ratio baseline, we train a logistic regression classifier to distinguish calibration and test samples using 64-dimensional embeddings extracted from the final layer of the trained predictor. The classifier is implemented using the \texttt{scikit-learn} library with default regularization settings. Density ratios are obtained from the predicted probabilities via the standard odds transformation.

\section{Additional results}
\label{app:results}
\subsection{Calibration coverage plots for global CP}
Figure~\ref{fig:appendix-global-coverage} presents calibration curves under covariate shift for global conformal prediction across all datasets. Each curve plots realized coverage against target coverage levels from $0.5$ to $0.95$, averaged over five random runs. The dashed line indicates perfect calibration.

Under covariate shift, uniform conformal prediction systematically undercovers, with deviations increasing at higher target coverage levels. Density-ratio baselines partially correct this behavior but exhibit residual miscalibration and variability across datasets. In particular, classifier- and KDE-based weighting methods often fail to fully align with the nominal coverage line at moderate-to-high coverage levels.

In contrast, KMM produces curves that more closely track the nominal diagonal, indicating improved distributional alignment. The proposed selective KMM (SKMM) further tightens this alignment, yielding the smallest deviations from nominal coverage across datasets. The improvement is consistent across both toxicity and activity prediction tasks, and persists across the full range of coverage levels.

These plots visually corroborate the mondran CP trends and ranking results reported in the main text.

\begin{figure*}[t]
  \centering
  \includegraphics[width=\textwidth]{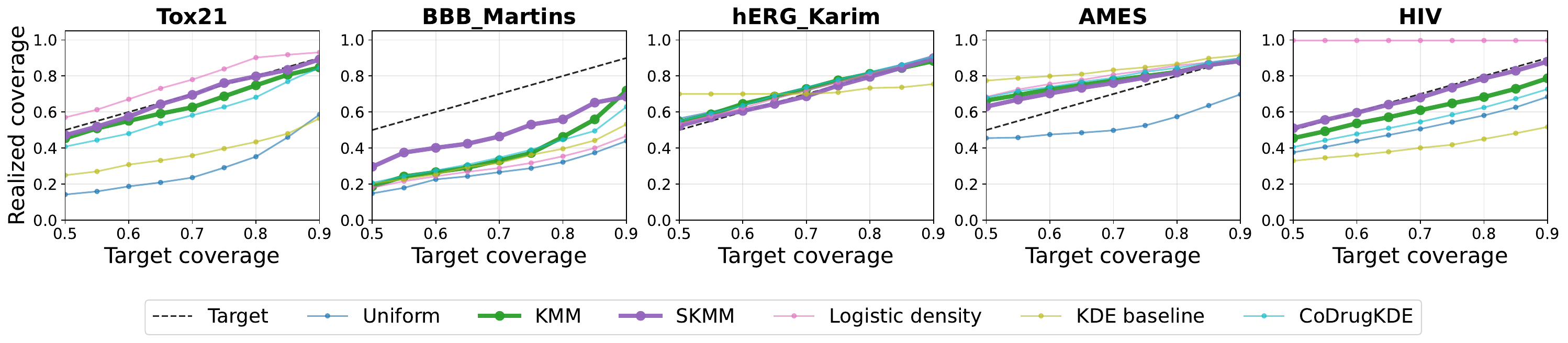}
  \caption{Calibration coverage under global conformal prediction across datasets.}
  \label{fig:appendix-global-coverage}
\end{figure*}


\subsection{Additional Bias-Variance and Selective Filtering plots}

In this section, we provide supplemental plots to support the analysis detailed in Section~\ref{sec:bias-variance} for the remaining datasets: AMES, hERG, and HIV.Figure~\ref{fig:bias-variance-supp} presents the bias-variance decomposition across these benchmarks, while Figure~\ref{fig:alpha-dist-supp} illustrates the corresponding $\alpha$ distribution plots. Overall, the bias-variance trends observed here closely mimic the results shown in Figure~\ref{fig:full-shift-mmd-ess}; specifically, the KMM-based methods continue to exhibit higher stability across varying shift intensities. Similarly, the $\alpha$ distributions demonstrate emergent sparsity across nearly all experimental settings. However, the hERG\_Karim dataset exhibits a denser $\alpha$ distribution; this is potentially attributable to a higher intrinsic overlap in the covariate support.

\begin{figure}[htbp]
    \centering
    \begin{subfigure}[b]{0.52\textwidth}
        \centering
        \includegraphics[width=\textwidth]{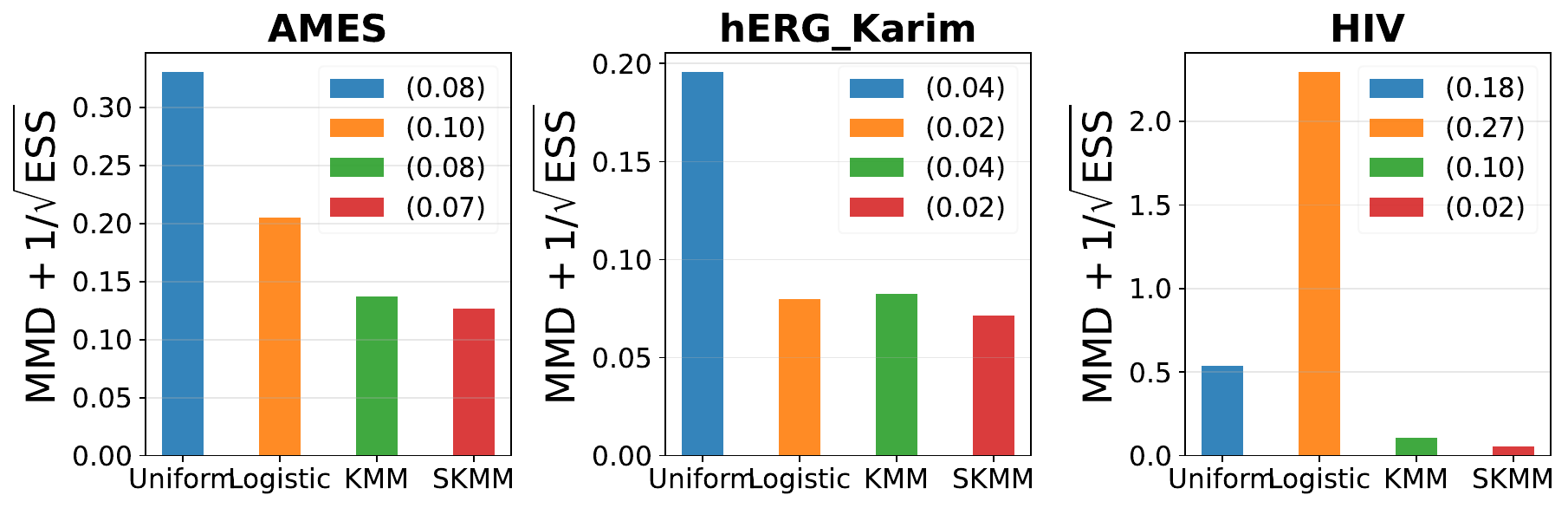}
        \caption{Bias-Variance (AMES)}
        \label{fig:bias-variance-supp}
    \end{subfigure}
    \hfill
    \\
    \begin{subfigure}[b]{0.52\textwidth}
        \centering
        \includegraphics[width=\textwidth]{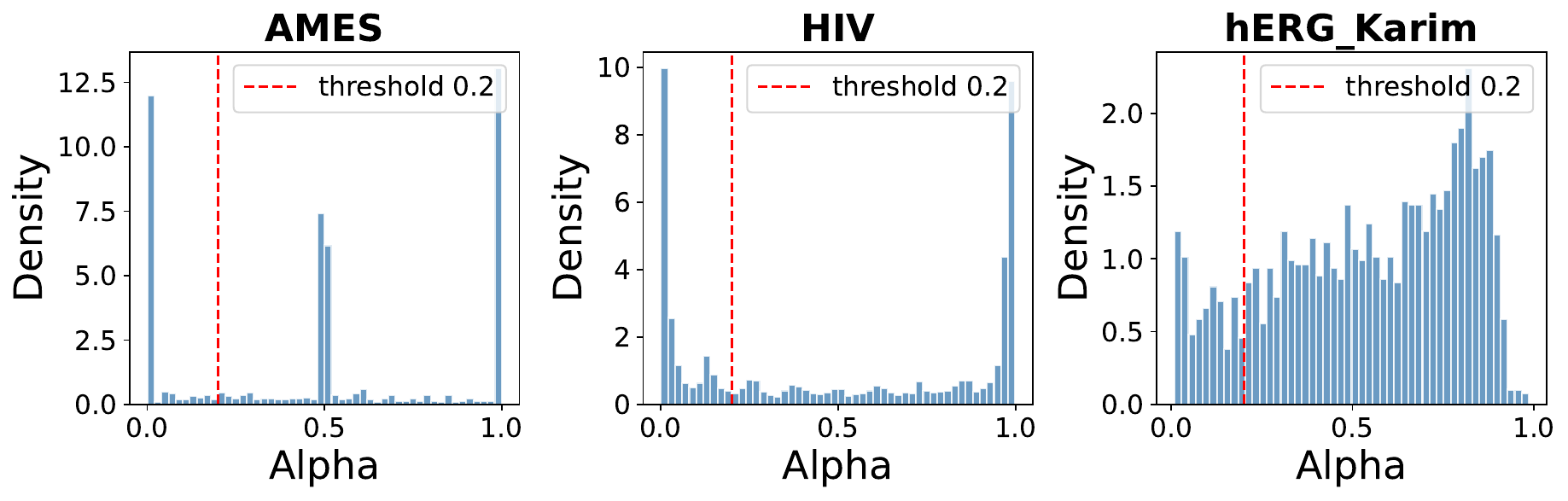}
        \caption{$\alpha$ Distribution (AMES)}
        \label{fig:alpha-dist-supp}
    \end{subfigure}
    
    \caption{Figure. (a) illustrates the bias-variance proxy($\mathrm{MMD} + \sqrt{1/\mathrm{ESS}}$) , while (b) displays the emergent sparsity in the weight distribution.}
\end{figure}

\subsection{Filtered Fraction}
\label{app:filtered_fraction}

Table~\ref{tab:app_filtered_fraction} reports the average filtration ratio under global and Mondrian calibration, defined as the percentage of test instances retained after the selective KMM filtering stage (mean $\pm$ standard deviation across five runs). Recall that the target selection fraction is set to $\tau = 0.5$, corresponding to retaining approximately 50\% of the test instances.

Across datasets, the observed retention rates range from roughly 58\% to 77\%, indicating that the learned selection variables $\alpha_j$ concentrate mass on a substantial but restricted subset of the test distribution. Also note that SKMM does not collapse to extreme filtering: a majority of test instances are retained across all benchmarks. This confirms that the method focuses correction on regions where density-ratio estimation is stable, rather than aggressively discarding data.

\begin{table}[t]
\centering
\caption{Average filtration ratio (\%) under two-stage Selective KMM (mean $\pm$ std).}
\label{tab:filtration_ratio}
\small
\begin{tabular}{lcc}
\toprule
Dataset &  Fraction Filtered \\
\midrule
AMES         & 71.2 $\pm$ 5.9 \\
Tox21        & 58.3 $\pm$ 0.5 \\
BBB-Martins  & 60.1 $\pm$ 10.1 \\
hERG-Karim   & 76.7 $\pm$ 5.9 \\
HIV          & 62.2 $\pm$ 1.4 \\
\bottomrule
\end{tabular}
\label{tab:app_filtered_fraction}
\end{table}

\subsection{Full Calibration Results Across Datasets}
\label{app:full_results}

The following tables report full calibration curves (mean $\pm$ std over five runs) for all datasets under both global and Mondrian conformal prediction across target coverages from $0.5$ to $0.95$. Best results per column are shown in \textbf{bold}, and second best in \textcolor{gray}{\textbf{gray bold}}.

\paragraph{Overall trends.}
Across all datasets and coverage levels, SKMM consistently achieves the lowest or near-lowest deviation from nominal coverage. This pattern holds for both global and class-conditional (Mondrian) calibration. KMM (Beta-only) typically ranks second, while classifier- and KDE-based baselines exhibit larger systematic deviations, particularly at higher target coverage levels.

\paragraph{Global calibration.}
Under global conformal prediction, uniform weighting frequently exhibits pronounced undercoverage as the target coverage increases. Density-ratio baselines partially mitigate this effect but remain unstable across datasets. In contrast, KMM substantially improves calibration alignment, and SKMM further reduces deviation across nearly all coverage levels. The average rank columns confirm this consistency: SKMM achieves the best mean rank on every dataset under global calibration.

\paragraph{Mondrian calibration} results show analogous results for Mondrian conformal prediction. The same qualitative behavior persists: SKMM maintains the strongest alignment with nominal coverage across coverage levels, followed by KMM. Importantly, improvements are not restricted to a specific coverage region; gains remain stable from moderate ($0.6$--$0.7$) to high ($0.9$--$0.95$) target coverage.

\small

\begin{table}[t]
\centering
\caption{AMES --- Global calibration (mean $\pm$ std). Best is \textbf{bold}; second best is \textcolor{gray}{\textbf{gray bold}}.}
\resizebox{\textwidth}{!}{\begin{tabular}{lccccccccccc}
\toprule
Method & 0.5 & 0.55 & 0.6 & 0.65 & 0.7 & 0.75 & 0.8 & 0.85 & 0.9 & 0.95 & Avg. Rank \\
\midrule
Uniform & \textbf{0.455$\pm$0.000} & \textbf{0.458$\pm$0.004} & \textcolor{gray}{\textbf{0.475$\pm$0.006}} & 0.485$\pm$0.004 & 0.498$\pm$0.005 & 0.525$\pm$0.014 & 0.574$\pm$0.034 & 0.636$\pm$0.040 & 0.698$\pm$0.034 & 0.824$\pm$0.048 & 4.60 \\
KDE & 0.774$\pm$0.127 & 0.788$\pm$0.113 & 0.799$\pm$0.104 & 0.810$\pm$0.092 & 0.833$\pm$0.081 & 0.848$\pm$0.073 & 0.866$\pm$0.051 & 0.898$\pm$0.046 & 0.915$\pm$0.036 & \textbf{0.933$\pm$0.022} & 4.80 \\
CoDrugKDE & 0.680$\pm$0.041 & 0.716$\pm$0.040 & 0.738$\pm$0.042 & 0.767$\pm$0.035 & 0.790$\pm$0.034 & 0.823$\pm$0.042 & 0.845$\pm$0.033 & 0.874$\pm$0.027 & \textbf{0.899$\pm$0.033} & \textcolor{gray}{\textbf{0.933$\pm$0.027}} & 3.00 \\
Logistic & 0.684$\pm$0.058 & 0.725$\pm$0.068 & 0.755$\pm$0.051 & 0.778$\pm$0.037 & 0.807$\pm$0.030 & 0.831$\pm$0.036 & 0.858$\pm$0.044 & 0.875$\pm$0.028 & \textcolor{gray}{\textbf{0.898$\pm$0.027}} & 0.925$\pm$0.030 & 4.10 \\
KMM & 0.664$\pm$0.026 & 0.694$\pm$0.025 & 0.726$\pm$0.021 & \textcolor{gray}{\textbf{0.750$\pm$0.012}} & \textcolor{gray}{\textbf{0.775$\pm$0.017}} & \textcolor{gray}{\textbf{0.799$\pm$0.021}} & \textcolor{gray}{\textbf{0.822$\pm$0.027}} & \textbf{0.861$\pm$0.017} & 0.885$\pm$0.024 & 0.915$\pm$0.019 & 2.60 \\
SKMM & \textcolor{gray}{\textbf{0.630$\pm$0.028}} & \textcolor{gray}{\textbf{0.669$\pm$0.024}} & \textbf{0.703$\pm$0.019} & \textbf{0.734$\pm$0.009} & \textbf{0.761$\pm$0.010} & \textbf{0.791$\pm$0.020} & \textbf{0.819$\pm$0.029} & \textcolor{gray}{\textbf{0.862$\pm$0.022}} & 0.883$\pm$0.025 & 0.927$\pm$0.020 & 1.90 \\
\bottomrule
\end{tabular}}
\end{table}

\begin{table}[t]
\centering
\caption{AMES --- Mondrian calibration (mean $\pm$ std). Best is \textbf{bold}; second best is \textcolor{gray}{\textbf{gray bold}}.}
\resizebox{\textwidth}{!}{\begin{tabular}{lccccccccccc}
\toprule
Method & 0.5 & 0.55 & 0.6 & 0.65 & 0.7 & 0.75 & 0.8 & 0.85 & 0.9 & 0.95 & Avg. Rank \\
\midrule
Uniform & \textbf{0.470$\pm$0.006} & \textbf{0.499$\pm$0.014} & \textcolor{gray}{\textbf{0.530$\pm$0.025}} & \textcolor{gray}{\textbf{0.569$\pm$0.035}} & 0.606$\pm$0.045 & 0.642$\pm$0.033 & 0.683$\pm$0.035 & 0.732$\pm$0.038 & 0.809$\pm$0.048 & 0.916$\pm$0.055 & 4.00 \\
KDE & \textcolor{gray}{\textbf{0.585$\pm$0.080}} & \textcolor{gray}{\textbf{0.605$\pm$0.083}} & \textbf{0.613$\pm$0.094} & \textbf{0.642$\pm$0.096} & \textbf{0.658$\pm$0.112} & \textbf{0.704$\pm$0.090} & 0.758$\pm$0.085 & 0.808$\pm$0.053 & 0.817$\pm$0.064 & 0.847$\pm$0.086 & 2.80 \\
CoDrugKDE & 0.685$\pm$0.031 & 0.710$\pm$0.034 & 0.735$\pm$0.039 & 0.764$\pm$0.038 & 0.792$\pm$0.032 & 0.820$\pm$0.028 & 0.838$\pm$0.033 & 0.865$\pm$0.031 & \textbf{0.901$\pm$0.026} & 0.926$\pm$0.030 & 4.10 \\
Logistic & 0.691$\pm$0.052 & 0.722$\pm$0.033 & 0.746$\pm$0.033 & 0.777$\pm$0.038 & 0.795$\pm$0.041 & 0.819$\pm$0.043 & 0.847$\pm$0.029 & 0.865$\pm$0.031 & \textcolor{gray}{\textbf{0.894$\pm$0.020}} & 0.926$\pm$0.030 & 4.70 \\
KMM & 0.669$\pm$0.020 & 0.698$\pm$0.024 & 0.723$\pm$0.029 & 0.744$\pm$0.030 & 0.767$\pm$0.030 & 0.803$\pm$0.026 & \textbf{0.834$\pm$0.029} & \textbf{0.855$\pm$0.027} & 0.881$\pm$0.024 & \textcolor{gray}{\textbf{0.926$\pm$0.028}} & 3.00 \\
SKMM & 0.655$\pm$0.013 & 0.680$\pm$0.020 & 0.706$\pm$0.022 & 0.740$\pm$0.026 & \textcolor{gray}{\textbf{0.765$\pm$0.032}} & \textcolor{gray}{\textbf{0.801$\pm$0.029}} & \textcolor{gray}{\textbf{0.837$\pm$0.032}} & \textcolor{gray}{\textbf{0.857$\pm$0.028}} & 0.889$\pm$0.028 & \textbf{0.936$\pm$0.026} & 2.40 \\
\bottomrule
\end{tabular}}
\end{table}

\begin{table}[t]
\centering
\caption{BBB\_Martins --- Global calibration (mean $\pm$ std). Best is \textbf{bold}; second best is \textcolor{gray}{\textbf{gray bold}}.}
\resizebox{\textwidth}{!}{\begin{tabular}{lccccccccccc}
\toprule
Method & 0.5 & 0.55 & 0.6 & 0.65 & 0.7 & 0.75 & 0.8 & 0.85 & 0.9 & 0.95 & Avg. Rank \\
\midrule
Uniform & 0.148$\pm$0.053 & 0.179$\pm$0.072 & 0.226$\pm$0.069 & 0.243$\pm$0.078 & 0.266$\pm$0.081 & 0.288$\pm$0.083 & 0.322$\pm$0.077 & 0.374$\pm$0.064 & 0.439$\pm$0.087 & 0.659$\pm$0.110 & 6.00 \\
KDE & 0.205$\pm$0.094 & 0.226$\pm$0.112 & 0.254$\pm$0.116 & 0.294$\pm$0.112 & 0.317$\pm$0.114 & 0.362$\pm$0.125 & 0.396$\pm$0.142 & 0.442$\pm$0.173 & 0.532$\pm$0.195 & 0.698$\pm$0.230 & 3.80 \\
CoDrugKDE & \textcolor{gray}{\textbf{0.207$\pm$0.098}} & \textcolor{gray}{\textbf{0.244$\pm$0.100}} & \textcolor{gray}{\textbf{0.275$\pm$0.100}} & \textcolor{gray}{\textbf{0.308$\pm$0.102}} & \textcolor{gray}{\textbf{0.347$\pm$0.089}} & \textcolor{gray}{\textbf{0.390$\pm$0.084}} & 0.446$\pm$0.093 & 0.495$\pm$0.108 & 0.629$\pm$0.080 & 0.770$\pm$0.055 & 2.40 \\
Logistic & 0.180$\pm$0.068 & 0.216$\pm$0.066 & 0.243$\pm$0.077 & 0.268$\pm$0.078 & 0.289$\pm$0.079 & 0.317$\pm$0.073 & 0.354$\pm$0.067 & 0.400$\pm$0.059 & 0.467$\pm$0.072 & 0.693$\pm$0.084 & 5.00 \\
KMM & 0.187$\pm$0.061 & 0.242$\pm$0.108 & 0.266$\pm$0.119 & 0.291$\pm$0.111 & 0.331$\pm$0.096 & 0.372$\pm$0.084 & \textcolor{gray}{\textbf{0.462$\pm$0.125}} & \textcolor{gray}{\textbf{0.559$\pm$0.110}} & \textbf{0.719$\pm$0.112} & \textbf{0.838$\pm$0.038} & 2.60 \\
SKMM & \textbf{0.296$\pm$0.121} & \textbf{0.375$\pm$0.171} & \textbf{0.401$\pm$0.166} & \textbf{0.424$\pm$0.157} & \textbf{0.464$\pm$0.142} & \textbf{0.530$\pm$0.135} & \textbf{0.559$\pm$0.138} & \textbf{0.652$\pm$0.094} & \textcolor{gray}{\textbf{0.685$\pm$0.091}} & \textcolor{gray}{\textbf{0.812$\pm$0.086}} & 1.20 \\
\bottomrule
\end{tabular}}
\end{table}

\begin{table}[t]
\centering
\caption{BBB\_Martins --- Mondrian calibration (mean $\pm$ std). Best is \textbf{bold}; second best is \textcolor{gray}{\textbf{gray bold}}.}
\resizebox{\textwidth}{!}{\begin{tabular}{lccccccccccc}
\toprule
Method & 0.5 & 0.55 & 0.6 & 0.65 & 0.7 & 0.75 & 0.8 & 0.85 & 0.9 & 0.95 & Avg. Rank \\
\midrule
Uniform & 0.189$\pm$0.005 & 0.202$\pm$0.002 & 0.208$\pm$0.003 & 0.216$\pm$0.004 & 0.222$\pm$0.004 & 0.236$\pm$0.010 & 0.252$\pm$0.010 & 0.287$\pm$0.022 & 0.367$\pm$0.069 & 0.683$\pm$0.088 & 5.70 \\
KDE & 0.214$\pm$0.015 & 0.226$\pm$0.022 & 0.241$\pm$0.020 & \textcolor{gray}{\textbf{0.269$\pm$0.020}} & 0.282$\pm$0.025 & 0.308$\pm$0.040 & 0.343$\pm$0.072 & 0.408$\pm$0.086 & 0.473$\pm$0.121 & 0.632$\pm$0.173 & 3.90 \\
CoDrugKDE & 0.218$\pm$0.014 & \textcolor{gray}{\textbf{0.235$\pm$0.020}} & \textcolor{gray}{\textbf{0.248$\pm$0.026}} & 0.265$\pm$0.027 & \textcolor{gray}{\textbf{0.291$\pm$0.041}} & 0.315$\pm$0.044 & 0.357$\pm$0.067 & 0.432$\pm$0.037 & 0.511$\pm$0.062 & \textbf{0.687$\pm$0.075} & 2.50 \\
Logistic & 0.202$\pm$0.003 & 0.209$\pm$0.006 & 0.218$\pm$0.005 & 0.226$\pm$0.011 & 0.237$\pm$0.013 & 0.252$\pm$0.019 & 0.272$\pm$0.018 & 0.311$\pm$0.047 & 0.386$\pm$0.065 & 0.658$\pm$0.090 & 5.00 \\
KMM & \textcolor{gray}{\textbf{0.219$\pm$0.013}} & 0.227$\pm$0.018 & 0.242$\pm$0.015 & 0.264$\pm$0.023 & 0.279$\pm$0.013 & \textcolor{gray}{\textbf{0.353$\pm$0.078}} & \textcolor{gray}{\textbf{0.437$\pm$0.027}} & \textcolor{gray}{\textbf{0.496$\pm$0.070}} & \textbf{0.604$\pm$0.089} & \textcolor{gray}{\textbf{0.686$\pm$0.085}} & 2.50 \\
SKMM & \textbf{0.272$\pm$0.037} & \textbf{0.312$\pm$0.031} & \textbf{0.329$\pm$0.028} & \textbf{0.383$\pm$0.091} & \textbf{0.433$\pm$0.094} & \textbf{0.453$\pm$0.112} & \textbf{0.510$\pm$0.112} & \textbf{0.575$\pm$0.082} & \textcolor{gray}{\textbf{0.604$\pm$0.092}} & 0.672$\pm$0.095 & 1.40 \\
\bottomrule
\end{tabular}}
\end{table}

\begin{table}[t]
\centering
\caption{HIV --- Global calibration (mean $\pm$ std). Best is \textbf{bold}; second best is \textcolor{gray}{\textbf{gray bold}}.}
\resizebox{\textwidth}{!}{\begin{tabular}{lccccccccccc}
\toprule
Method & 0.5 & 0.55 & 0.6 & 0.65 & 0.7 & 0.75 & 0.8 & 0.85 & 0.9 & 0.95 & Avg. Rank \\
\midrule
Uniform & 0.376$\pm$0.063 & 0.406$\pm$0.063 & 0.439$\pm$0.064 & 0.472$\pm$0.064 & 0.506$\pm$0.064 & 0.544$\pm$0.063 & 0.581$\pm$0.065 & 0.626$\pm$0.065 & 0.684$\pm$0.061 & 0.779$\pm$0.049 & 4.40 \\
KDE & 0.329$\pm$0.124 & 0.346$\pm$0.126 & 0.361$\pm$0.132 & 0.379$\pm$0.137 & 0.401$\pm$0.144 & 0.418$\pm$0.160 & 0.450$\pm$0.168 & 0.482$\pm$0.188 & 0.517$\pm$0.203 & 0.577$\pm$0.255 & 5.60 \\
CoDrugKDE & 0.405$\pm$0.101 & 0.443$\pm$0.105 & 0.478$\pm$0.110 & 0.510$\pm$0.108 & 0.545$\pm$0.112 & 0.585$\pm$0.121 & 0.625$\pm$0.116 & 0.674$\pm$0.110 & 0.727$\pm$0.089 & 0.803$\pm$0.091 & 3.30 \\
Logistic & 0.997$\pm$0.002 & 0.997$\pm$0.002 & 0.997$\pm$0.002 & 0.997$\pm$0.002 & 0.997$\pm$0.002 & 0.997$\pm$0.002 & 0.997$\pm$0.002 & 0.997$\pm$0.002 & \textcolor{gray}{\textbf{0.997$\pm$0.002}} & \textcolor{gray}{\textbf{0.997$\pm$0.002}} & 4.50 \\
KMM & \textcolor{gray}{\textbf{0.455$\pm$0.058}} & \textcolor{gray}{\textbf{0.493$\pm$0.056}} & \textcolor{gray}{\textbf{0.537$\pm$0.048}} & \textcolor{gray}{\textbf{0.571$\pm$0.042}} & \textcolor{gray}{\textbf{0.610$\pm$0.042}} & \textcolor{gray}{\textbf{0.648$\pm$0.033}} & \textcolor{gray}{\textbf{0.683$\pm$0.031}} & \textcolor{gray}{\textbf{0.729$\pm$0.035}} & 0.787$\pm$0.031 & 0.863$\pm$0.034 & 2.20 \\
SKMM & \textbf{0.510$\pm$0.053} & \textbf{0.556$\pm$0.056} & \textbf{0.596$\pm$0.050} & \textbf{0.641$\pm$0.052} & \textbf{0.680$\pm$0.051} & \textbf{0.735$\pm$0.043} & \textbf{0.787$\pm$0.040} & \textbf{0.829$\pm$0.034} & \textbf{0.880$\pm$0.031} & \textbf{0.945$\pm$0.028} & 1.00 \\
\bottomrule
\end{tabular}}
\end{table}

\begin{table}[t]
\centering
\caption{HIV --- Mondrian calibration (mean $\pm$ std). Best is \textbf{bold}; second best is \textcolor{gray}{\textbf{gray bold}}.}
\resizebox{\textwidth}{!}{\begin{tabular}{lccccccccccc}
\toprule
Method & 0.5 & 0.55 & 0.6 & 0.65 & 0.7 & 0.75 & 0.8 & 0.85 & 0.9 & 0.95 & Avg. Rank \\
\midrule
Uniform & 0.385$\pm$0.063 & 0.415$\pm$0.064 & 0.448$\pm$0.063 & 0.480$\pm$0.066 & 0.512$\pm$0.067 & 0.546$\pm$0.068 & 0.581$\pm$0.068 & 0.619$\pm$0.067 & 0.667$\pm$0.064 & 0.732$\pm$0.060 & 4.40 \\
KDE & 0.341$\pm$0.124 & 0.358$\pm$0.123 & 0.373$\pm$0.130 & 0.392$\pm$0.132 & 0.411$\pm$0.139 & 0.427$\pm$0.150 & 0.457$\pm$0.159 & 0.479$\pm$0.171 & 0.515$\pm$0.177 & 0.552$\pm$0.201 & 5.60 \\
CoDrugKDE & 0.409$\pm$0.096 & 0.445$\pm$0.103 & 0.481$\pm$0.106 & 0.509$\pm$0.109 & 0.544$\pm$0.106 & 0.574$\pm$0.107 & 0.615$\pm$0.119 & 0.659$\pm$0.115 & 0.705$\pm$0.112 & 0.769$\pm$0.097 & 3.40 \\
Logistic & 0.969$\pm$0.001 & 0.969$\pm$0.001 & 0.969$\pm$0.001 & 0.969$\pm$0.001 & 0.969$\pm$0.001 & 0.969$\pm$0.001 & 0.969$\pm$0.001 & \textcolor{gray}{\textbf{0.969$\pm$0.001}} & \textcolor{gray}{\textbf{0.969$\pm$0.001}} & \textbf{0.972$\pm$0.009} & 4.20 \\
KMM & \textcolor{gray}{\textbf{0.452$\pm$0.059}} & \textcolor{gray}{\textbf{0.488$\pm$0.060}} & \textcolor{gray}{\textbf{0.527$\pm$0.057}} & \textcolor{gray}{\textbf{0.565$\pm$0.048}} & \textcolor{gray}{\textbf{0.600$\pm$0.044}} & \textcolor{gray}{\textbf{0.640$\pm$0.039}} & \textcolor{gray}{\textbf{0.674$\pm$0.035}} & 0.714$\pm$0.038 & 0.759$\pm$0.032 & 0.826$\pm$0.030 & 2.30 \\
SKMM & \textbf{0.505$\pm$0.059} & \textbf{0.548$\pm$0.054} & \textbf{0.593$\pm$0.053} & \textbf{0.631$\pm$0.048} & \textbf{0.676$\pm$0.053} & \textbf{0.713$\pm$0.048} & \textbf{0.770$\pm$0.040} & \textbf{0.816$\pm$0.043} & \textbf{0.861$\pm$0.034} & \textcolor{gray}{\textbf{0.910$\pm$0.031}} & 1.10 \\
\bottomrule
\end{tabular}}
\end{table}

\begin{table}[t]
\centering
\caption{Tox21 --- Global calibration (mean $\pm$ std). Best is \textbf{bold}; second best is \textcolor{gray}{\textbf{gray bold}}.}
\resizebox{\textwidth}{!}{\begin{tabular}{lccccccccccc}
\toprule
Method & 0.5 & 0.55 & 0.6 & 0.65 & 0.7 & 0.75 & 0.8 & 0.85 & 0.9 & 0.95 & Avg. Rank \\
\midrule
Uniform & 0.142$\pm$0.014 & 0.159$\pm$0.012 & 0.187$\pm$0.018 & 0.209$\pm$0.018 & 0.236$\pm$0.019 & 0.291$\pm$0.020 & 0.352$\pm$0.009 & 0.460$\pm$0.015 & 0.586$\pm$0.010 & 0.760$\pm$0.029 & 5.80 \\
KDE & 0.249$\pm$0.021 & 0.270$\pm$0.023 & 0.308$\pm$0.023 & 0.331$\pm$0.023 & 0.358$\pm$0.017 & 0.397$\pm$0.023 & 0.434$\pm$0.032 & 0.480$\pm$0.037 & 0.564$\pm$0.028 & 0.709$\pm$0.070 & 5.20 \\
CoDrugKDE & 0.408$\pm$0.117 & 0.444$\pm$0.116 & 0.480$\pm$0.125 & 0.537$\pm$0.129 & 0.582$\pm$0.145 & 0.628$\pm$0.132 & 0.682$\pm$0.137 & 0.769$\pm$0.066 & 0.846$\pm$0.038 & 0.906$\pm$0.027 & 3.90 \\
Logistic & 0.570$\pm$0.063 & 0.613$\pm$0.076 & 0.671$\pm$0.094 & 0.731$\pm$0.117 & 0.780$\pm$0.101 & 0.839$\pm$0.071 & 0.902$\pm$0.036 & 0.918$\pm$0.034 & \textcolor{gray}{\textbf{0.931$\pm$0.021}} & \textcolor{gray}{\textbf{0.940$\pm$0.009}} & 2.80 \\
KMM & \textcolor{gray}{\textbf{0.454$\pm$0.027}} & \textcolor{gray}{\textbf{0.509$\pm$0.021}} & \textcolor{gray}{\textbf{0.551$\pm$0.005}} & \textcolor{gray}{\textbf{0.591$\pm$0.008}} & \textcolor{gray}{\textbf{0.626$\pm$0.015}} & \textcolor{gray}{\textbf{0.687$\pm$0.006}} & \textcolor{gray}{\textbf{0.748$\pm$0.011}} & \textcolor{gray}{\textbf{0.807$\pm$0.013}} & 0.847$\pm$0.015 & 0.905$\pm$0.013 & 2.30 \\
SKMM & \textbf{0.471$\pm$0.024} & \textbf{0.519$\pm$0.014} & \textbf{0.574$\pm$0.033} & \textbf{0.643$\pm$0.069} & \textbf{0.695$\pm$0.056} & \textbf{0.761$\pm$0.028} & \textbf{0.798$\pm$0.018} & \textbf{0.834$\pm$0.024} & \textbf{0.892$\pm$0.014} & \textbf{0.956$\pm$0.004} & 1.00 \\
\bottomrule
\end{tabular}}
\end{table}

\begin{table}[t]
\centering
\caption{Tox21 --- Mondrian calibration (mean $\pm$ std). Best is \textbf{bold}; second best is \textcolor{gray}{\textbf{gray bold}}.}
\resizebox{\textwidth}{!}{\begin{tabular}{lccccccccccc}
\toprule
Method & 0.5 & 0.55 & 0.6 & 0.65 & 0.7 & 0.75 & 0.8 & 0.85 & 0.9 & 0.95 & Avg. Rank \\
\midrule
Uniform & 0.137$\pm$0.017 & 0.154$\pm$0.013 & 0.175$\pm$0.010 & 0.202$\pm$0.017 & 0.225$\pm$0.019 & 0.257$\pm$0.019 & 0.317$\pm$0.013 & 0.386$\pm$0.009 & 0.518$\pm$0.014 & 0.623$\pm$0.011 & 6.00 \\
KDE & 0.247$\pm$0.019 & 0.268$\pm$0.021 & 0.305$\pm$0.023 & 0.326$\pm$0.029 & 0.353$\pm$0.016 & 0.386$\pm$0.018 & 0.429$\pm$0.032 & 0.469$\pm$0.030 & 0.542$\pm$0.039 & 0.669$\pm$0.078 & 5.00 \\
CoDrugKDE & 0.399$\pm$0.121 & 0.438$\pm$0.115 & 0.473$\pm$0.126 & 0.523$\pm$0.130 & 0.575$\pm$0.149 & 0.627$\pm$0.145 & 0.678$\pm$0.139 & 0.751$\pm$0.104 & 0.845$\pm$0.038 & 0.906$\pm$0.027 & 4.00 \\
Logistic & 0.590$\pm$0.062 & 0.636$\pm$0.086 & 0.692$\pm$0.100 & 0.760$\pm$0.108 & 0.807$\pm$0.087 & 0.873$\pm$0.041 & 0.903$\pm$0.034 & 0.916$\pm$0.033 & \textcolor{gray}{\textbf{0.925$\pm$0.020}} & \textcolor{gray}{\textbf{0.934$\pm$0.008}} & 2.80 \\
KMM & \textcolor{gray}{\textbf{0.448$\pm$0.022}} & \textcolor{gray}{\textbf{0.504$\pm$0.033}} & \textcolor{gray}{\textbf{0.546$\pm$0.005}} & \textcolor{gray}{\textbf{0.585$\pm$0.017}} & \textcolor{gray}{\textbf{0.616$\pm$0.010}} & \textcolor{gray}{\textbf{0.675$\pm$0.025}} & \textcolor{gray}{\textbf{0.743$\pm$0.012}} & \textcolor{gray}{\textbf{0.809$\pm$0.012}} & 0.860$\pm$0.016 & 0.907$\pm$0.014 & 2.20 \\
SKMM & \textbf{0.466$\pm$0.025} & \textbf{0.521$\pm$0.013} & \textbf{0.570$\pm$0.029} & \textbf{0.627$\pm$0.051} & \textbf{0.688$\pm$0.048} & \textbf{0.767$\pm$0.017} & \textbf{0.796$\pm$0.018} & \textbf{0.830$\pm$0.016} & \textbf{0.891$\pm$0.015} & \textbf{0.951$\pm$0.003} & 1.00 \\
\bottomrule
\end{tabular}}
\end{table}

\begin{table}[t]
\centering
\caption{hERG\_Karim --- Global calibration (mean $\pm$ std). Best is \textbf{bold}; second best is \textcolor{gray}{\textbf{gray bold}}.}
\resizebox{\textwidth}{!}{\begin{tabular}{lccccccccccc}
\toprule
Method & 0.5 & 0.55 & 0.6 & 0.65 & 0.7 & 0.75 & 0.8 & 0.85 & 0.9 & 0.95 & Avg. Rank \\
\midrule
Uniform & 0.564$\pm$0.026 & 0.600$\pm$0.024 & 0.643$\pm$0.034 & 0.692$\pm$0.032 & 0.734$\pm$0.037 & 0.776$\pm$0.045 & 0.820$\pm$0.044 & 0.862$\pm$0.037 & 0.912$\pm$0.024 & 0.958$\pm$0.014 & 4.30 \\
KDE & 0.700$\pm$0.180 & 0.700$\pm$0.180 & 0.700$\pm$0.180 & 0.700$\pm$0.180 & \textbf{0.700$\pm$0.180} & 0.709$\pm$0.170 & 0.733$\pm$0.151 & 0.737$\pm$0.149 & 0.755$\pm$0.157 & 0.807$\pm$0.177 & 5.50 \\
CoDrugKDE & 0.551$\pm$0.037 & 0.592$\pm$0.036 & 0.636$\pm$0.045 & 0.690$\pm$0.042 & 0.737$\pm$0.057 & 0.779$\pm$0.060 & 0.822$\pm$0.053 & 0.863$\pm$0.046 & \textcolor{gray}{\textbf{0.907$\pm$0.032}} & \textbf{0.949$\pm$0.021} & 3.90 \\
Logistic & \textcolor{gray}{\textbf{0.535$\pm$0.043}} & \textcolor{gray}{\textbf{0.578$\pm$0.039}} & \textcolor{gray}{\textbf{0.622$\pm$0.043}} & \textcolor{gray}{\textbf{0.674$\pm$0.037}} & 0.719$\pm$0.028 & \textcolor{gray}{\textbf{0.765$\pm$0.030}} & 0.813$\pm$0.035 & 0.861$\pm$0.027 & 0.910$\pm$0.027 & 0.957$\pm$0.015 & 2.50 \\
KMM & 0.545$\pm$0.027 & 0.587$\pm$0.038 & 0.645$\pm$0.045 & 0.686$\pm$0.057 & 0.728$\pm$0.055 & 0.776$\pm$0.055 & \textcolor{gray}{\textbf{0.812$\pm$0.060}} & \textcolor{gray}{\textbf{0.844$\pm$0.074}} & 0.883$\pm$0.066 & 0.930$\pm$0.065 & 3.60 \\
SKMM & \textbf{0.523$\pm$0.046} & \textbf{0.570$\pm$0.053} & \textbf{0.606$\pm$0.067} & \textbf{0.645$\pm$0.067} & \textcolor{gray}{\textbf{0.687$\pm$0.058}} & \textbf{0.747$\pm$0.053} & \textbf{0.795$\pm$0.046} & \textbf{0.846$\pm$0.028} & \textbf{0.899$\pm$0.018} & \textcolor{gray}{\textbf{0.944$\pm$0.016}} & 1.20 \\
\bottomrule
\end{tabular}}
\end{table}

\begin{table}[t]
\centering
\caption{hERG\_Karim --- Mondrian calibration (mean $\pm$ std). Best is \textbf{bold}; second best is \textcolor{gray}{\textbf{gray bold}}.}
\resizebox{\textwidth}{!}{\begin{tabular}{lccccccccccc}
\toprule
Method & 0.5 & 0.55 & 0.6 & 0.65 & 0.7 & 0.75 & 0.8 & 0.85 & 0.9 & 0.95 & Avg. Rank \\
\midrule
Uniform & 0.594$\pm$0.044 & 0.629$\pm$0.040 & 0.664$\pm$0.037 & 0.696$\pm$0.036 & 0.729$\pm$0.037 & 0.768$\pm$0.038 & 0.812$\pm$0.032 & \textbf{0.853$\pm$0.019} & \textbf{0.899$\pm$0.015} & \textcolor{gray}{\textbf{0.941$\pm$0.009}} & 4.30 \\
KDE & \textbf{0.545$\pm$0.129} & \textbf{0.545$\pm$0.129} & \textbf{0.591$\pm$0.115} & 0.612$\pm$0.115 & 0.620$\pm$0.120 & 0.642$\pm$0.125 & 0.642$\pm$0.125 & 0.662$\pm$0.140 & 0.684$\pm$0.137 & 0.716$\pm$0.071 & 4.30 \\
CoDrugKDE & 0.571$\pm$0.043 & 0.612$\pm$0.045 & 0.649$\pm$0.042 & 0.681$\pm$0.044 & 0.720$\pm$0.045 & 0.759$\pm$0.041 & \textbf{0.801$\pm$0.035} & \textcolor{gray}{\textbf{0.840$\pm$0.026}} & 0.887$\pm$0.014 & 0.929$\pm$0.010 & 3.20 \\
Logistic & \textcolor{gray}{\textbf{0.546$\pm$0.046}} & 0.590$\pm$0.043 & 0.631$\pm$0.045 & \textcolor{gray}{\textbf{0.675$\pm$0.036}} & \textcolor{gray}{\textbf{0.716$\pm$0.033}} & \textcolor{gray}{\textbf{0.758$\pm$0.030}} & 0.811$\pm$0.024 & 0.861$\pm$0.018 & \textcolor{gray}{\textbf{0.909$\pm$0.018}} & \textbf{0.950$\pm$0.011} & 2.50 \\
KMM & 0.572$\pm$0.058 & 0.609$\pm$0.057 & 0.656$\pm$0.062 & 0.694$\pm$0.072 & 0.726$\pm$0.075 & 0.764$\pm$0.085 & \textcolor{gray}{\textbf{0.802$\pm$0.090}} & 0.831$\pm$0.093 & 0.868$\pm$0.089 & 0.914$\pm$0.069 & 4.40 \\
SKMM & 0.548$\pm$0.078 & \textcolor{gray}{\textbf{0.589$\pm$0.074}} & \textcolor{gray}{\textbf{0.625$\pm$0.069}} & \textbf{0.666$\pm$0.067} & \textbf{0.706$\pm$0.063} & \textbf{0.753$\pm$0.062} & 0.797$\pm$0.055 & 0.839$\pm$0.040 & 0.884$\pm$0.029 & 0.931$\pm$0.023 & 2.30 \\
\bottomrule
\end{tabular}}
\end{table}

\end{document}